\documentclass{article}

\usepackage[preprint]{miao_template}

\usepackage{natbib}
\usepackage[utf8]{inputenc} 
\usepackage[T1]{fontenc}    
\usepackage{url}            
\usepackage{booktabs}       
\usepackage{amsfonts}       
\usepackage{nicefrac}       
\usepackage{microtype}      
\usepackage{xcolor}         
\usepackage{environ}
\usepackage{pifont}         
\usepackage{longtable}

\definecolor{myorange}{RGB}{230,120,20}

\usepackage{amsmath,amssymb,amsfonts}
\usepackage{graphicx}
\usepackage{subcaption}
\usepackage{tikz}
\usepackage{colortbl}
\usepackage{array}
\usepackage{enumitem}

\usepackage{amsmath, amssymb, amsthm}
\usepackage{geometry}
\usepackage{multirow}
\usepackage{xspace}
\usepackage{makecell}

\usepackage{svg}
\usepackage{wrapfig}
\usepackage{booktabs}
\usepackage{pifont} 
\usepackage[table]{xcolor}

\usepackage{xr-hyper}
\usepackage{hyperref}       
\usepackage{cleveref}
\theoremstyle{definition}
\newtheorem{definition}{Definition}
\theoremstyle{theorem}
\newtheorem{theorem}{Theorem}
\theoremstyle{proof}
\theoremstyle{lemma}
\newtheorem{lemma}{Lemma}
\theoremstyle{remark}
\newtheorem{remark}{Remark}

\graphicspath{{../figures}}

\definecolor{apexblue}{HTML}{378ADD}
\definecolor{apexred}{HTML}{E24B4A}
\definecolor{apexgreen}{HTML}{27AE60}
\definecolor{apexpurple}{HTML}{7F77DD}
\definecolor{novelty}{HTML}{1A5276}

\newcommand{\Dd}{\mathcal{D}}
\newcommand{\Tt}{\mathcal{T}}
\newcommand{\ours}{\textsc{APEX}\xspace}

\title{\Large \ours: Assumption-free Projection-based Embedding eXamination Metric for Image Quality Assessment}

\author{%
  Caterina Gallegati$^{1}$ \quad Monica Bianchini$^{1}$ \quad Franco Scarselli$^{1}$ \\
  $^{1}$University of Siena, Italy \\
  \texttt{c.gallegati@student.unisi.it,\{monica.bianchini, franco.scarselli\}@unisi.it} \\
  \AND
  Vittorio Murino$^{2,3}$ \quad Barbara Toniella Corradini$^{2}$ \\
  $^{2}$AI for Good (AIGO), Istituto Italiano di Tecnologia, Italy \\
  $^{3}$University of Verona, Italy \\
  \texttt{\{barbara.corradini, vittorio.murino\}@iit.it} \\
}

\begin{document}

\maketitle

\begin{abstract}
As generative models achieve unprecedented visual quality, the gold standard for image evaluation remains traditional feature-distribution metrics (e.g., FID).
However, these metrics are provably hindered by the \textit{closed-vocabulary bottleneck} of outdated features and the \textit{assumptive bias} of rigid parametric formulations.
Recent alternatives exploit modern backbones to solve the feature bottleneck, yet continue to suffer from parametric limitations. 
To close this gap, we introduce APEX (Assumption-free Projection-based Embedding eXamination), a novel evaluation framework leveraging the Sliced Wasserstein Distance as a mathematically grounded, assumption-free similarity measure. 
APEX inherits effective scalability to high-dimensional spaces, as we prove with theoretical and empirical evidences. 
Moreover, APEX is embedding-agnostic and uses two open-vocabulary foundation models, CLIP and DINOv2, as feature extractors.
Benchmarking APEX against established baselines reveals superior robustness to visual degradations.
Additionally, we show that APEX metrics exhibit intra- and cross-dataset stability, ensuring highly stable evaluations on out-of-domain datasets. 
\end{abstract}

\section{Introduction} 
\label{sec:intro}
In recent years, the emergence of novel generative paradigms has fundamentally revolutionized the landscape of visual synthesis, yielding unprecedented advancements in image quality, diversity, and realism~\citep{dhariwal2021diffusion, yang2023diffusion}.
Specifically, the transition from traditional adversarial frameworks~\citep{goodfellow2014generative, karras2019style} to diffusion-based~\citep{sohl2015deep, ho2020denoising} and score-matching models~\citep{song2019generative, song2021score} has established new state-of-the-art benchmarks for modelling complex data distributions. 
These architectures have enabled highly controllable, large-scale text-to-image synthesis~\citep{rombach2022high, ramesh2022hierarchical, peebles2023scalable}, ultimately redefining the boundaries of generative modelling. \\
From early frameworks to contemporary architectures, the evolution of generative models has been fundamentally driven by the need to quantify synthetic image quality to maximize and refine training outcomes.
First efforts in image quality assessment operated exclusively within the image space, typically measuring pixel-level errors (e.g., Mean Squared Error~\citep{wang2009mean}, Peak Signal-to-Noise Ratio~\citep{tanchenko2014visual}) or local structural distortions (e.g., Structural Similarity Index Measure~\citep{wang2004image}) against a ground-truth reference.
However, the assumption of pixel-wise independence makes them insufficient for evaluating images.
Since the remarkable qualitative progress of generative modelling relies on approximating high-dimensional data distributions~\citep{TheisOB15}, quantitative evaluation of generated samples has shifted from pixel-level comparison to measuring distributional similarity with respect to the target. 
Given that deep neural networks naturally capture human-aligned perceptual similarities---as formalised by LPIPS metric~\citep{zhang2018unreasonable}---the evaluation methods developed to date typically adopt a two-step approach: (i) extract low-dimensional image embeddings from pre-trained networks, and (ii) compute some distributional distance between real and synthetic samples.
As the most prominent example, Fréchet Inception Distance (FID)~\citep{heusel2017gans} measures the Fréchet distance between Gaussians fitted to the distributions of real and synthetic embeddings from Inception-v3~\citep{salimans2016improved}.
Given the same embeddings, the Kernel Inception Distance (KID)~\citep{binkowski2018demystifying}
computes the Maximum Mean Discrepancy (MMD) via a polynomial kernel, providing an unbiased alternative with greater robustness on smaller batches.
Despite the popularity, these metrics present significant limitations, both in terms of the underlying statistical hypotheses and the expressiveness of embedding networks.
On the statistical front, the reliance on rigid formulations introduces an \emph{assumptive bias}: the Gaussian assumption inherent in FID---empirically proven to be incorrect \citep{jayasumana2024rethinking}---roughly reflects the high-dimensional, multimodal geometry of modern generative latent spaces, while the MMD formulation in KID suffers from extreme sensitivity to kernel hyperparameters~\citep{bischoff2024practical}.
In contrast, the assumptive bias is mitigated by the Sliced Wasserstein Distance (SWD), which also offers efficiency and sensibility to class-distributional variations~\citep{kwon2026evaluating} despite its poorly intuitive interpretation~\citep{bischoff2024practical}.
In terms of the embedding network, the reliance on Inception-v3 introduces a \emph{closed-vocabulary bottleneck}: being trained exclusively on ImageNet~\citep{deng2009imagenet}, the metrics are blind to the open-world semantics and complex text-alignment of recent multimodal models.
To overcome the closed-vocabulary bottleneck, recent metric proposals have replaced Inception-v3 with large foundation models (e.g., CLIP~\citep{radford2021learning}, DINOv2~\citep{oquab2024dinov}), positioning them as more reliable and perceptually accurate feature extractors~\citep{jayasumana2024rethinking, kwon2026evaluating, stein2023exposing}.
Nevertheless, these alternatives have later been shown to suffer from critical drawbacks as well, such as the strong assumptions behind formulations, the sensitivity to hyperparameter configuration, weakness under non-trivial image perturbations, and poor correlation with human perception.
These inconsistencies, alongside the unprecedented fidelity of modern synthetic images, raise the question of whether current gold-standards are still sufficient to evaluate the latest models.
\\
In this work we introduce APEX, an Assumption-free Projection-based Embedding eXamination framework for image quality assessment. 
To overcome the assumptive bias, APEX leverages the SWD, a mathematically grounded, assumption-free metric not relying on rigid parametric hypotheses or kernel configurations. 
Built upon the one-dimensional projection-based nature of the SWD, APEX also provides a stable and sample-efficient evaluation method even in high-dimensional embedding spaces.
Simultaneously, to resolve the closed-vocabulary bottleneck, our framework operates as a highly flexible, embedding-agnostic tool. 
While we primarily showcase the application to CLIP and DINOv2 as state-of-the-art foundation models, it natively supports any modern feature space. 
Our contributions can be summarized as follows:
\vspace*{-2em}
\begin{itemize}[leftmargin=*,noitemsep,topsep=1.7em]
\item[--] We audit failure modes and limitations of existing image quality assessment metrics, analysing how their behaviour is affected by backbone choices, distributional assumptions, kernel configurations, finite-sample effects, and cross-domain shifts. 
\item[--] We introduce APEX, a projection-based evaluation framework for comparing image distributions in foundation model embedding spaces.
By combining Sliced Wasserstein Distance with CLIP and DINOv2 representations, APEX reduces reliance on rigid parametric assumptions and kernel hyperparameter choices, while remaining flexible to different visual backbones.
\item[--] We investigate the stability of evaluations by analysing the number of random projections required for a reliable SWD estimation.
We provide both theoretical motivations and empirical ablations, identifying a trade-off between estimation stability and computational cost.
\item[--] We benchmark APEX against established metrics via an extensive evaluation protocol spanning heterogeneous visual domains, 
various degradations, analysing finite-sample stability, cross-dataset consistency, and sensitivity to progressive generation and refinement.
We prove APEX achieves strong intra- and cross-dataset robustness, rapid sample convergence, and scalable computation.
\item[--] A dedicated human perceptual study further shows that APEX is competitive with the strongest baselines in human alignment, while offering more stable behaviour across domains.
\end{itemize}
\vspace*{-1em}
The remainder of the paper is organized as follows.
\Cref{sec:related_works} summarises the established metrics for image quality assessment, pointing out their motivations, strengths and limitations. 
\Cref{sec:method} describes the proposed \ours method, providing theoretical and empirical guarantees on the stability of SWD estimation. 
\Cref{sec:experiments} presents the evaluation framework used to benchmark the APEX metrics against the existing baselines, whose results are reported and discussed in \Cref{sec:discussion}.

\section{Related Work} \label{sec:related_works}
Quality assessment of synthetic images has undergone a continuous evolution, which can be traced along two interconnected trajectories: the underlying data representation space and the mathematical notion of similarity applied to the resulting distributions.

\textbf{Convolutional Networks and Statistical Divergences.}
The Inception Score (IS) was proposed in~\citep{salimans2016improved} as a deep feature-based metric, serving as an alternative to human annotators for assessing the visual quality of synthetic samples. 
Generated images are encoded via an Inception-v3 deep neural network~\citep{szegedy2016rethinking} pre-trained on ImageNet, and the IS score is obtained by computing the entropy of the class probabilities. 
Critically, IS does not involve the target dataset used to train the generative model---i.e., the evaluation is limited to the knowledge of the Inception network.
The FID score~\citep{heusel2017gans} was formulated as an improvement over IS. 
Instead of just looking at the final class predictions, FID extracts visual features from an intermediate layer of Inception-v3 for both the target and the generated images. 
Then, it compares the resulting means and covariances by computing the Fréchet distance, assuming that the embedded feature representations follow a Gaussian distribution. 
KID~\citep{binkowski2018demystifying} aims to mitigate this strong and unrealistic assumption.
It computes the squared MMD between Inception-v3 embeddings via a polynomial kernel. 
Nevertheless, all these metrics benchmark synthetic samples by relying on outdated and closed-vocabulary feature spaces, hence they may not generalize beyond their original training data.

\textbf{Foundation Models and Statistical Divergences.}
More recently, CLIP-MMD (CMMD)~\citep{jayasumana2024rethinking} has been introduced as a more up-to-date and fair alternative. 
While maintaining MMD as a solution to the incorrect Gaussian assumption, it leverages CLIP as pre-trained embedding network to extract image features. 
This fundamental innovation represents a significant step forward in the evaluation of modern generative models, as it finally breaks free from the long-standing dependence on Inception-v3 and its ImageNet-centric priors.
In~\citep{stein2023exposing}, different self-supervised feature extractors are analysed to reveal the one most strongly correlated with human evaluation. 
Since DINOv2-ViT-L/14 is reported to provide a much richer evaluation of generative models, FD-DINOv2 is proposed as the closest metric to human reference.

\textbf{Foundation Models and Optimal Transport.}
\citet{bischoff2024practical} present a theoretical study on statistical properties of the most common similarity metrics. 
They criticise MMD for being overly sensitive to kernel selection and hyperparameter configuration. 
Conversely, they emphasise the SWD for its efficiency and scalability, establishing it as an objective and hyperparameter-agnostic baseline for general distribution comparisons.
\citet{kwon2026evaluating} analyses the sensitivity to class-distributional changes of various metrics by leveraging CLIP-based feature embeddings. 
Specifically, both SWD and MMD have been proven to be sensitive to small distributional differences; however, the latter is competitive only for suitable choices of its kernel bandwidth.

\textbf{APEX: Overcoming Existing Limitations.}
Previous approaches are fundamentally constrained by obsolete representation spaces such as the ImageNet-centric priors of Inception-v3. 
Furthermore, they either force unrealistic assumptions on data or remain overly sensitive to kernel choice and bandwidth tuning.
Our work effectively overcomes these limitations. 
To extract image representations, we leverage CLIP and DINOv2, two modern foundation models that have established themselves as state-of-the-art feature extractors for the richness and density of their latent spaces.
Moreover, we replace the fragile statistical divergences with the SWD Monte Carlo estimation, which we prove to require a limited number of projections to achieve efficient and stable evaluations. 
Thus, we formulate a principled assumption-free projection-based metric, moving away from kernel sensitivities and outdated dataset biases.

\section{The APEX metrics} \label{sec:method}
We propose APEX, an Assumption-free Projection-based Embedding eXamination framework for image quality assessment.
APEX is a feature-distribution metric that measures the SWD between image embedding extracted by state-of-the-art foundation models, specifically CLIP and DINOv2.\\
The \textbf{Sliced Wasserstein Distance}~\citep{rabin2011wasserstein}
computes the average of the $2$-Wasserstein distance on one-dimensional projections of two probability distributions $\mu$ and $\nu$. 
Let $\pi_\theta: \mathbb{R}^d \rightarrow \mathbb{R}$ be projecting onto a vector $\theta \in \mathbb{S}^{d-1}$ of the 
$d$-dimensional unit sphere, then the (squared) SWD is 
\begin{equation} \label{eq:sliced_wass_integral}
    SW^2_2(\mu,\nu)= \int_{\mathbb{S}^{d-1}} \mathcal{W}^2_2(\pi_{\theta_\sharp}\mu,\pi_{\theta_\sharp}\nu) d\theta, \,\, \mathcal{W}^2_2(\pi_{\theta_\sharp}\mu,\pi_{\theta_\sharp}\nu)= \min_{\gamma \in \Gamma(\pi_{\theta_\sharp}\mu,\pi_{\theta_\sharp}\nu)} \int |u-v|^2 d\gamma(u,v).
\end{equation}
In practice, the SWD can be approximated by using a simple Monte Carlo scheme that uniformly draws $L$ sample directions ${\theta_l}$ on $\mathbb{S}^{d-1}$, and replaces the integration with a finite-sample average.
The Monte Carlo approximation effectively makes the SWD both easy to compute in practice and computationally efficient.
Moreover, no additional careful hyperparameter tuning is required. 
The only choice to be made is the number of directions $L$ used in the approximation. 
We investigate this aspect in \Cref{subsec:num_directions}, providing theoretical and empirical support to the choice of $L{=}500$ in \ours. \\
The SWD in APEX computes the distance between two distributions of image embeddings, specifically extracted either from CLIP or DINOv2.\\
\textbf{CLIP} (Contrastive Language--Image Pretraining) by~\citet{radford2021learning} jointly trains an image encoder and a text encoder on 400M image--caption pairs scraped from the web.
The encoders are optimized with a contrastive objective that pulls matching image--text pairs together and pushes mismatched pairs apart in a shared embedding space.
At convergence, the image encoder produces representations that capture high-level semantic content aligned with natural language descriptions, enabling strong zero-shot transfer across visual tasks. \\
\textbf{DINOv2} (knowledge DIstillation with NO labels) by~\citet{oquab2024dinov} trains a Vision Transformer with a self-supervised objective combining image- and patch-level self-distillation. 
A student network is trained to match the output of a momentum-updated teacher on global \texttt{[CLS]} and local patch tokens, with no text supervision or labels.
Representations encode fine-grained visual structure (texture, shape, spatial layout) giving an effectively complementary perspective to CLIP features. \\
Depending on whether CLIP or DINOv2 embeddings are utilized, we denote the resulting metric as \ours-CLIP or \ours-DINO, respectively.

\subsection{Analysis on the number of projections} \label{subsec:num_directions}
We investigate the minimal number of projections $L$ required for a stable and robust Monte Carlo approximation of the (squared) Sliced Wasserstein Distance. 
First, minimising the number of projections reduces the overall computational demand of the SWD implementation.
Additionally, it prevents the estimator to excessively favour high-frequency noise in the sample distributions. 
Thus, limiting $L$ by focusing on the meaningful manifold structure allows to mitigate the curse of dimensionality occurring in high-dimensional latent spaces.
\begin{theorem} \label{theorem_num_directions}
    Let $\mathcal{X}$ be the input space of images and $P,Q \in \mathcal{P}(\mathcal{X})$ two data distributions. 
    Let $\phi: \mathcal{X} \rightarrow \mathbb{R}^d$ be a pre-trained embedding function and assume $\phi(\mathcal{X})=\mathcal{M}$ is a bounded $k$-dimensional Riemannian manifold embedded in $\mathbb{R}^d$, with $k<<d$ and diameter $D$. The latent embedding measures are defined as push-forwards $\mu= \phi_\sharp P$ and $\nu= \phi_\sharp Q$.
    Let $SW^2_2(\mu,\nu)$ be the true squared Sliced Wasserstein Distance and $\widehat{SW_2^2}(\mu,\nu)$ its Monte Carlo estimate using $L$ uniformly sampled independent projections.
    Then, for any tolerance $\tau>0$ and failure probability $0<\delta<1$, if 
    \begin{equation}
        L \geq \frac{2D^4}{\tau^2} \left[ 2k \log \left( \frac{8CD^2}{\tau} \right) - \log\left( \frac{\delta}{2} \right) \right]
    \end{equation}
    the 
    estimation error $|SW_2^2(\mu,\nu)-\widehat{SW_2^2}(\mu,\nu)|\leq \tau$ 
    with probability at least $1-\delta$ (where $C$ is a constant depending on the curvature of the manifold $\mathcal{M}$).
\end{theorem}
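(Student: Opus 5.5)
The natural route is a Hoeffding-type concentration bound for the Monte Carlo average, followed by an $\epsilon$-net/covering argument over the $k$-dimensional manifold that produces the $2k\log(8CD^2/\tau)$ entropy term. I read $\widehat{SW_2^2}(\mu,\nu)=\frac{1}{L}\sum_{l=1}^L f(\theta_l)$ with $f(\theta)=\mathcal{W}_2^2(\pi_{\theta_\sharp}\mu,\pi_{\theta_\sharp}\nu)$ and $\theta_l$ i.i.d. uniform on $\mathbb{S}^{d-1}$. Then $\mathbb{E}[f(\theta_l)]=SW_2^2(\mu,\nu)$, with $d\theta$ read as the normalized uniform measure.

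\textbf{Step 1 (boundedness).} Both measures are supported on $\mathcal{M}$, which has diameter $D$. For any unit $\theta$, the projections $\pi_\theta x$ and $\pi_\theta y$ of $x,y\in\mathcal{M}$ satisfy $|\pi_\theta x-\pi_\theta y|\le\|x-y\|\le D$. Any coupling therefore gives $f(\theta)\in[0,D^2]$.

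\textbf{Step 2 (Hoeffding).} Hoeffding's inequality yields
\[
\Pr\big(|\widehat{SW_2^2}-SW_2^2|>t\big)\le 2\exp\!\big(-2Lt^2/D^4\big).
\]
Taken alone, this already gives the statement with a smaller $L$, so the $k$-dependent term can only make the bound more conservative.

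\textbf{Step 3 (entropy term).} To reproduce the stated form, I would work with uniformity over a class of test functions instead of the fixed pair $(\mu,\nu)$. Cover $\mathcal{M}$ at scale $\epsilon$. The curvature-dependent constant $C$ enters through a Bishop–Günther type volume comparison, which gives a covering number $N(\epsilon)\le (C D/\epsilon)^k$. Now consider the induced family of projected transport costs. Perturbing the support points by $\epsilon$ changes $f(\theta)$ by at most $2D\epsilon$, because $|a^2-b^2|\le |a-b|(a+b)$ with $a,b\le D$.

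Choose $\epsilon=\tau/(8D)$ so that this discretization error is at most $\tau/4$. Apply Hoeffding at level $t=\tau/2$ to each element of the net and take a union bound over the roughly $N(\epsilon)^2\le (8CD^2/\tau)^{2k}$ pairs of net cells. The squared count accounts for the two measures $\mu,\nu$ and produces the factor $2k$. The failure probability is then at most
\[
2\,(8CD^2/\tau)^{2k}\exp\!\big(-L\tau^2/(2D^4)\big).
\]
Setting this $\le\delta$ and solving for $L$ gives exactly
\[
L\ge \frac{2D^4}{\tau^2}\Big[2k\log\frac{8CD^2}{\tau}-\log\frac{\delta}{2}\Big].
\]
The constants check out: $t=\tau/2$ gives $2t^2/D^4=\tau^2/(2D^4)$, consistent with the leading factor.

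\textbf{Main obstacle.} The hardest part is making Step 3 rigorous. The directions $\theta$ live on $\mathbb{S}^{d-1}$, not on $\mathcal{M}$, so the covering must be of the measures' supports and discretized measures, not of the directions. Quantizing $\mu,\nu$ to the net changes each $\mathcal{W}_2^2$ by at most $2D\epsilon$ uniformly in $\theta$, and this error cancels between the true and estimated quantities. I would first establish this uniform Lipschitz control in $\mathcal{W}_\infty$, then invoke the volume-comparison bound for $N(\epsilon)$. One could argue that the fixed-pair Hoeffding bound of Step 2 already suffices, so I would present Step 3 as the uniform version that yields the stated $k$-dependent rate.
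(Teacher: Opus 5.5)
Your Steps 1--2 coincide with the start of the paper's proof: the bound $f(\theta)\in[0,D^2]$ and Hoeffding with tail $2e^{-2Lt^2/D^4}$. After that the routes differ. The paper obtains the $2k\log(8CD^2/\tau)$ term in four moves. It proves a lemma that $\theta\mapsto\mathcal{W}_2^2(\pi_{\theta\sharp}\mu,\pi_{\theta\sharp}\nu)$ is $2D^2$-Lipschitz on $\mathbb{S}^{d-1}$. It takes an $\epsilon$-net of size $(C/\epsilon)^{2k}$, motivated by the difference vectors $\phi(x)-\phi(y)$ spanning an at most $2k$-dimensional set, and uses the net points as directions $\theta_j$. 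It pays $2\cdot 2D^2\epsilon$ to move from an arbitrary $\theta$ to the net. Finally it sets $\epsilon=\tau/(8D^2)$ and $t=\tau/2$ and applies a union bound. You instead cover the supports at scale $\tau/(8D)$ and square the covering number to account for the two measures.

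Your side remark is in fact the cleanest proof. For a fixed pair $(\mu,\nu)$, a union bound can only multiply the failure probability by $\mathcal{N}_\epsilon\ge 1$. So Hoeffding with $t=\tau$ alone gives the sharper, $k$- and $C$-free condition $L\ge\frac{D^4}{2\tau^2}\log\frac{2}{\delta}$, which the stated threshold implies. To make that implication airtight, state that you need $2k\log(8CD^2/\tau)\ge-\tfrac34\log(2/\delta)$, for example $8CD^2\ge\tau$. This is implicit in the theorem: $(C/\epsilon)^{2k}$ can bound a covering number at $\epsilon=\tau/(8D^2)$ only if it is at least $1$. When $\tau\ge D^2$ the claim is trivial, because both $SW_2^2$ and its estimate lie in $[0,D^2]$.

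I would drop Step 3 rather than present it as a ``uniform version''. A union bound over $N(\epsilon)^2$ pairs of cells controls only pairs of point masses on the net. It does not control all pairs of quantized measures, whose weights range over a continuum, so it gives no uniformity over measure pairs. The bookkeeping also does not close. Quantizing both measures moves $\mathcal{W}_2$ of the projections by up to $2\epsilon$, hence $\mathcal{W}_2^2$ by up to $4D\epsilon$. This error is incurred on both the true and the estimated side, where it adds rather than cancels. With $\epsilon=\tau/(8D)$ the total therefore exceeds $\tau$. Since the theorem concerns a fixed pair, none of this is needed: Steps 1--2 plus the threshold comparison constitute the proof.
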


CLIP and DINOv2 have a $\ell_2$-normalization layer by design, making the boundedness assumption consistent with our experimental setting ($D{=}2$).
The number of projections $L$ needed to maintain the estimation error within a certain threshold scales linearly with $k$, i.e. the intrinsic dimension of the manifold where embedding distributions are supported.
Moreover, the choice of 
$L$ is independent on the ambient dimension $d$ of the embeddings.
The proof of \Cref{theorem_num_directions} 
is reported in Appendix \ref{app:numb_proj}.\\
Additionally, we provide empirical evidences on this result with the ablation study in \Cref{app:fig:swd_projections}. 
We report the trade-off between the number of projections $L$ used in the estimate and the \ours metrics score across datasets and degradations. 
We also report the corresponding runtime as $L$ increases.
We observe a clear stabilisation of the metrics occurs from $L{=}500$, where the computational time started to significantly increase.
\section{Experiments} \label{sec:experiments}
We design a comprehensive evaluation framework to benchmark the proposed \ours{} metrics against established image quality assessment metrics.
In~\Cref{subsec:datasets}, we present five heterogeneous evaluation domains---natural images, faces, dermoscopy, radiography, and remote sensing---moving beyond the benchmarks considered in prior works. 
We then describe the evaluated metrics in~\Cref{subsec:evaluation_metrics}, comparing \ours{} with generative evaluation standards and recent foundation-model-based baselines.
In~\Cref{subsec:protocol}, we define a multi-axis evaluation protocol going beyond aggregate benchmark scores. 
All experiments were conducted on a workstation equipped with an Intel Core Ultra 9 285K CPU with 24 physical cores, an NVIDIA GeForce RTX 5080 GPU with 16 GB of VRAM and CUDA 13.0, and a 2 TB SSD.

\subsection{Datasets}
\label{subsec:datasets}
To exhaustively evaluate our method, we conduct experiments across multiple datasets characterized by~diverse domains, varying resolutions, and distinct semantic content (see~\Cref{fig:datasets}).

\textbf{COCO-30k} is a standard evaluation benchmark consisting of $30{,}000$ image--caption pairs from the MS-COCO 2014 validation split~\citep{lin2014microsoft}. 
Unlike custom or object-centric samplings, this canonical subset provides a diverse and unconstrained representation of 80 distinct object categories in real-world contexts. 
Images exhibit natural variance in spatial resolution, retaining their original aspect ratios with the longest edge typically resized to 640 pixels.

\textbf{HAM10000}~\citep{tschandl2018ham10000} comprises $10{,}015$ multi-source dermoscopic images of pigmented skin lesions. 
The dataset spans seven distinct diagnostic categories (including melanoma and basal cell carcinoma), providing a rigorous benchmark for fine-grained classification and robustness in real-world clinical domains.

\textbf{CelebA-HQ}~\citep{karras2018progressive} is a widely adopted benchmark for high-resolution image synthesis. 
It is a refined version of the large-scale CelebFaces Attributes (CelebA) dataset (\citet{liu2015deep}), containing $30{,}000$ diverse facial images.
The images were curated through a rigorous pipeline involving artifact removal, super-resolution, and standardized facial alignment, yielding high-fidelity data at a resolution of $1024{\times}1024$ pixels.

\textbf{NIH ChestX-ray14} by~\citet{chestxray2017wang} comprises $112{,}120$ frontal-view radiographs from $30{,}805$ patients. 
The dataset represents a drastic shift 
from standard RGB images to grayscale medical scans. 
Semantically, it poses a multi-label challenge, encompassing $14$ distinct thoracic pathologies where multiple conditions often co-occur. 
Images are originally provided at a high $1024{\times}1024$ resolution.

\textbf{NWPU-RESISC45}~\citep{cheng2017nwpu} is composed of $31{,}500$ images uniformly distributed across $45$ diverse scene classes (e.g., airports, industrial areas, and wetlands).
This dataset represents a profound shift in data nature to the remote sensing domain. 
While all images are formatted to $256{\times}256$ pixels, the dataset exhibits extreme variability in spatial resolution (Ground Sample Distance), ranging broadly from $0.2$ to $30$ meters per pixel.%
\begin{figure}[h!]
    \centering
    \captionsetup[subfigure]{font=footnotesize}
    \begin{subfigure}{0.15\textwidth}
        \centering
        \includegraphics[width=\linewidth]{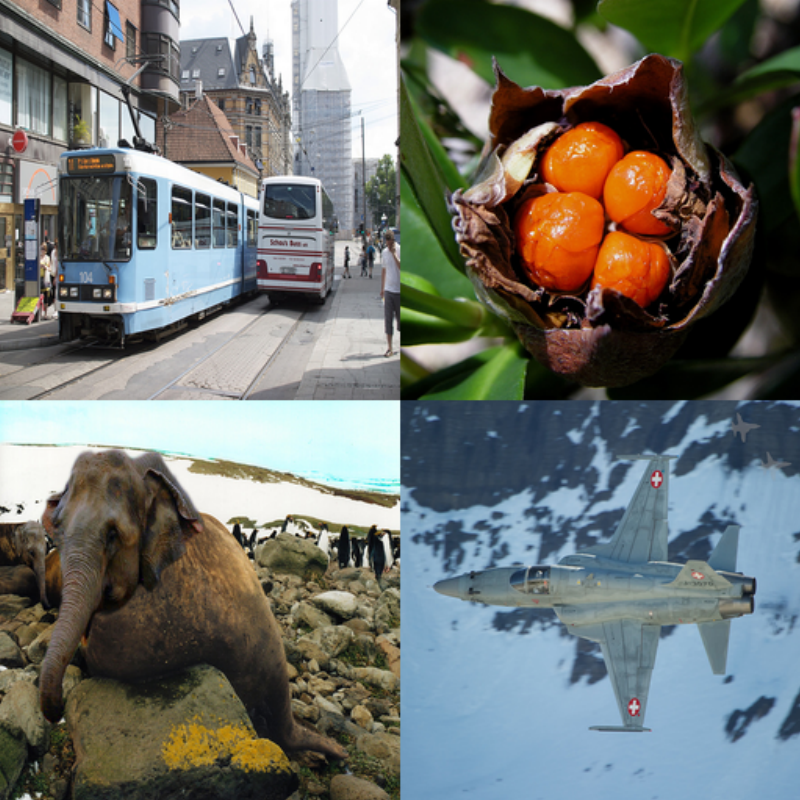}
        \caption{\tiny COCO-30k}
    \end{subfigure}%
    \hfill
    \begin{subfigure}{0.15\textwidth}
        \centering
        \includegraphics[width=\linewidth]{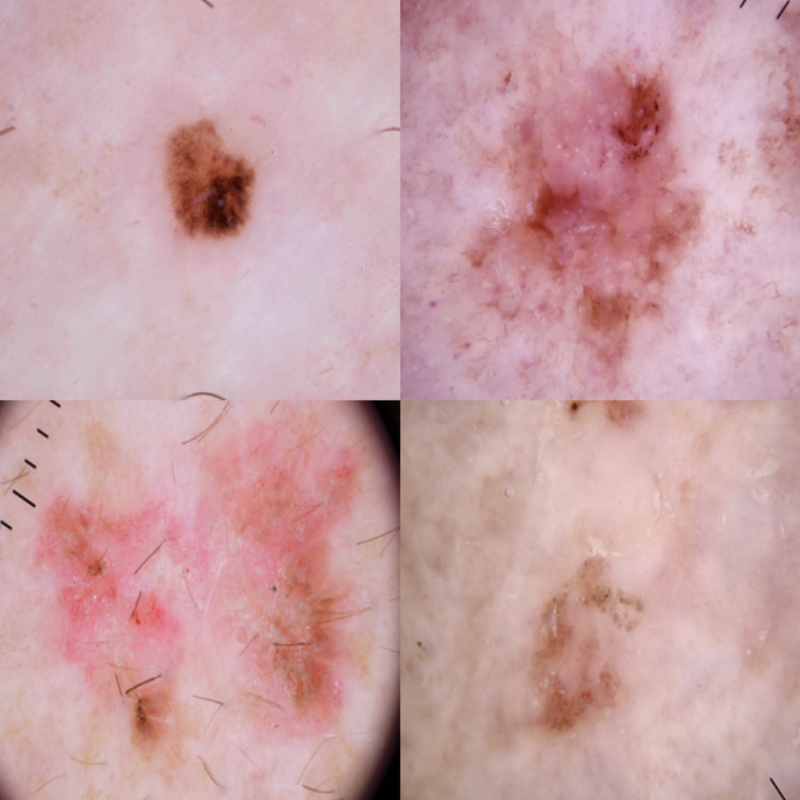}
        \caption{\tiny HAM 10000}
    \end{subfigure}%
    \hfill
    \begin{subfigure}{0.15\textwidth}
        \centering
        \includegraphics[width=\linewidth]{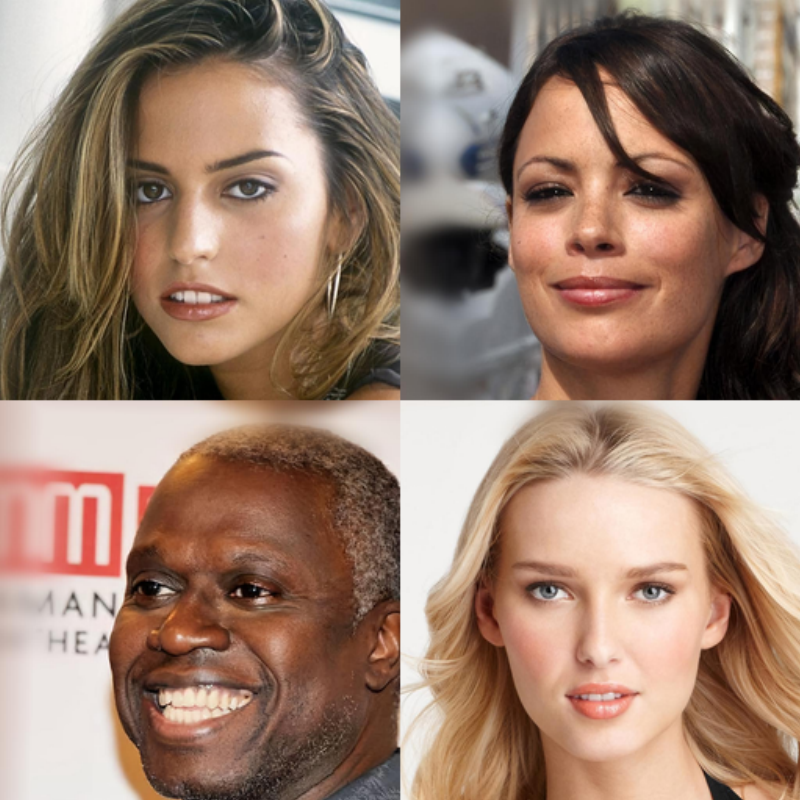}
        \caption{\tiny CelebA-HQ}
    \end{subfigure}%
    \hfill
    \begin{subfigure}{0.15\textwidth}
        \centering
        \includegraphics[width=\linewidth]{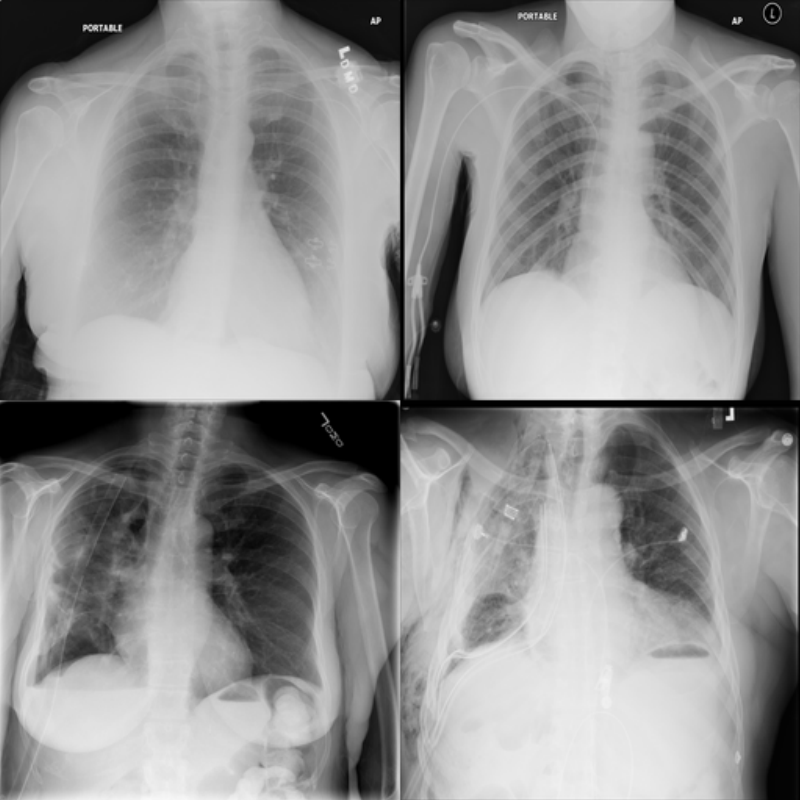}
        \caption{\tiny NIH ChestX-ray14}
    \end{subfigure}%
    \hfill
    \begin{subfigure}{0.15\textwidth}
        \centering
        \includegraphics[width=\linewidth]{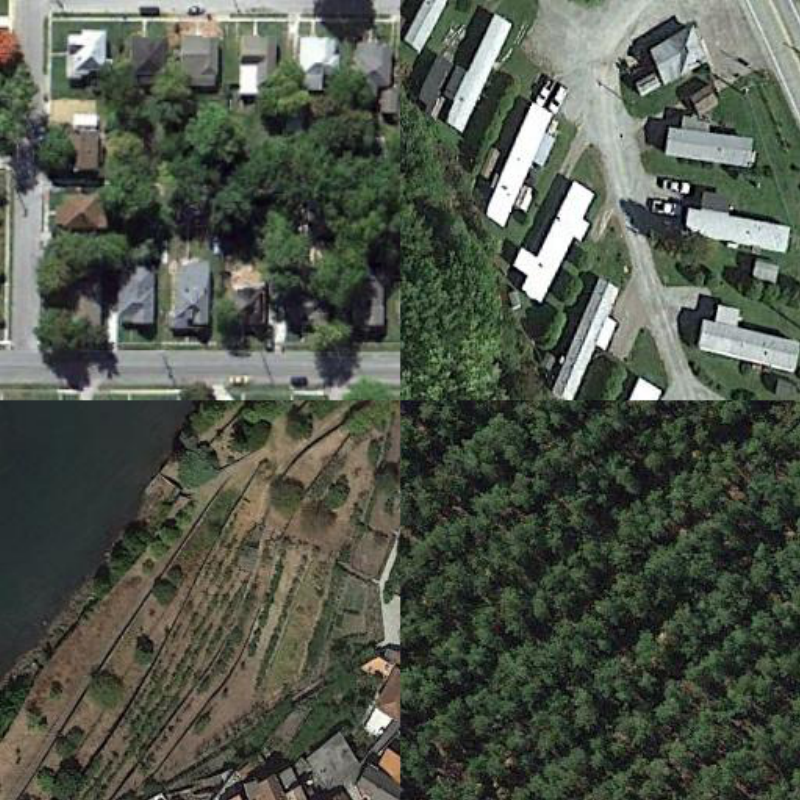}
        \caption{\tiny NWPU-RESISC45}
    \end{subfigure}
    \caption{\textbf{Sample images from the five evaluation datasets}. We span different domains: natural scenes (COCO-30k), dermoscopy (HAM10000), faces (CelebA-HQ), medical radiographs (NIH Chest X-Ray), and satellite imagery (NWPU-RESISC45).
    }
    \label{fig:datasets}
\end{figure}
\subsection{Image Quality Assessment Metrics} \label{subsec:evaluation_metrics}
To comprehensively assess evaluation robustness under distribution shifts, we benchmark a diverse suite of similarity metrics. \Cref{tab:metrics-overview} summarizes the technical specifications of the evaluated metrics.

\textbf{Proposed Metrics (\ours).} 
We evaluate the two metrics formulated in \Cref{sec:method}. In detail, \ours-CLIP leverages OpenAI's CLIP ViT-L/14@336px yielding 768-dimensional \texttt{[CLS]} embeddings. 
Conversely, \ours-DINO captures a richer architectural hierarchy by concatenating the 1024-dimensional \texttt{[CLS]} tokens from layers 6, 12, and 23 of DINOv2 ViT-L/14 into a single 3,072-dimensional representation.
To prevent the activation magnitudes of deeper layers from dominating the joint space, each layer is $\ell_2$-normalized prior to concatenation.
This multi-scale strategy explicitly leverages the hierarchical representations of the backbone, capturing a progression from low-level textures to mid-level structures and high-level semantics.
These metrics can be used in synergy: \ours-CLIP provides a robust assessment of global semantic fidelity, while \ours-DINO offers diagnostic utility by isolating which level of visual abstraction is affected by specific distribution shifts.
We use batch size 32 and 16 for CLIP and DINO, respectively.

\textbf{Image Quality Assessment Baselines.}
We benchmark our approaches against established literature standards, computing CMMD for modern feature spaces, alongside FID and KID as previous generative standards. 
Additionally, to rigorously disentangle the contribution of the feature backbone from the choice of statistical distance, we investigate DINO+MMD as an exploratory variant. 
This baseline applies the MMD to the $3{,}072$-dimensional DINOv2 feature space, with a mixture of $7$ RBF bandwidths relative to the median pairwise distance.
We use batch size 32 and 16 for CLIP and DINO, 64 for FID and KID.
\begin{table}[t]
\centering
\caption{\textbf{Overview of evaluation metrics considered in this work and their structural properties.}
\textbf{AF}~(Assumption-Free feature distributions),
\textbf{FE}~(Foundation model Embeddings),
\textbf{OT}~(Optimal Transport
distance, no kernel/moment matching).}
\label{tab:metrics-overview}
\setlength{\tabcolsep}{5pt}
\renewcommand{\arraystretch}{1.15}
\footnotesize
\begin{tabular}{@{}lllcccc@{}}
\toprule
\textbf{Metric}
  & \textbf{Backbone}
  & \textbf{Distance}
  & \textbf{Dim.}
  & \textbf{AF}
  & \textbf{FE}
  & \textbf{OT} \\
\midrule
\multicolumn{7}{@{}l}{\emph{Convolutional Networks \& Statistical Divergences}} \\
FID~\citep{heusel2017gans}
  & Inception-v3   & Fr\'{e}chet                & 2048
  & \ding{55} & \ding{55} & \ding{55} \\
KID~\citep{binkowski2018demystifying}
  & Inception-v3   & MMD (polynomial)           & 2048
  & \checkmark & \ding{55} & \ding{55} \\
\addlinespace[0.5em]
\multicolumn{7}{@{}l}{\emph{Foundation models \& Kernel Distances}} \\
CMMD~\citep{jayasumana2024rethinking}
  & CLIP ViT-L/14  & MMD (RBF, $\sigma\!=\!10$) & 768
  & \checkmark & \checkmark & \ding{55} \\
DINO+MMD\textsuperscript{$\dagger$}
  & DINOv2 ViT-L/14 & MMD (median heur.)        & 3072
  & \checkmark & \checkmark & \ding{55} \\
\addlinespace[0.5em]
\multicolumn{7}{@{}l}{\emph{Foundation models \& Optimal Transport (ours)}} \\
\rowcolor{gray!10}
\ours-CLIP
  & CLIP ViT-L/14   & SWD ($L{=}500$) & 768
  & \checkmark & \checkmark & \checkmark \\
\rowcolor{gray!10}
\ours-DINO
  & DINOv2 ViT-L/14 & SWD ($L{=}500$) & 3072
  & \checkmark & \checkmark & \checkmark \\
\bottomrule
\end{tabular}

\vspace{2pt}
{\footnotesize
\textsuperscript{$\dagger$}Exploratory variant evaluated in this work.
}
\end{table}

\subsection{Evaluation Protocol}
\label{subsec:protocol}
Our evaluation framework is designed to compare the metrics based on the following fundamental criteria: responsiveness to diverse pixel- and latent-level distortions (\Cref{subsubsec:degradation_sensitivity}),
intra-dataset stability, efficiency under finite sample sizes, invariance of the degradation signal across different domains (\Cref{sec:substability}), capability to capture subtle details during the generative refinement phase (\Cref{subsubsec:refine_sens}), and overall alignment with human perceptual judgments (\Cref{subsubsec:human}).

\subsubsection{Degradation Sensitivity}\label{subsubsec:degradation_sensitivity}
To evaluate APEX under controlled distribution shifts, we extend the protocol of~\citep{jayasumana2024rethinking} with pixel- and latent-space perturbations across domains.
Pixel-level degradations include Gaussian noise, blur, resolution drop, JPEG compression, and colour shifts, covering sensor noise, defocus, information loss, compression artefacts, and illumination changes.
In latent space, we perturb Stable Diffusion VAE features to probe the structural robustness of learned representations.
Examples are shown in Appendix~\ref{app:fig:degradations}.

\begin{definition}[Degradation signal]
Let $\Dd_k$ be a dataset, $\tau$ a degradation applied with severity $s$, and $\phi$ a feature extractor. Then, for some notion of distance $d$, the degradation signal is defined as
\begin{equation} \label{eq:degrad_sens}
    \Delta_k(\tau, s) \;=\; d\bigl(\phi(\tau_s(\Dd_k)),\;
    \phi(\Dd_k)\bigr)
\end{equation}
where $\tau_s(\Dd_k)$ denotes the element-wise application of $\tau_s$ to samples in $\Dd_k$.\\
A well-behaved metric should satisfy \emph{monotonicity} relative to the degradation signal as severity increases:
$\Delta_k(\tau, s_1) < \Delta_k(\tau, s_2)$ whenever $s_1 < s_2$,
i.e.\ stronger perturbations yield larger distances.
To make $\Delta_k$ values comparable across metrics or datasets, we normalise each response curve to $[0, 1]$.
\end{definition}

\subsubsection{Intra- and Cross-dataset Metrics Stability}
\label{sec:substability}
Distributional metrics rely on finite samples, therefore suffer from inherent estimation noise that sets a baseline floor, preventing reliable detection of minor degradations. 
Additionally, achieving a stable measurement often demands a prohibitively large number of images, hindering sample efficiency. 
Furthermore, cross-domain comparisons are fundamentally limited by the absence of an absolute scale---for instance, a specific score might denote clean natural images but severely corrupted medical ones. 
To evaluate these vulnerabilities, 
we systematically characterise for each metric: (i) 
intra-dataset finite-sample bias; (ii) convergence against evaluation dataset size; and (iii) cross-dataset consistency.
\begin{definition}[Finite-sample Bias] 
\label{def:stab_coeff}
Let $A_k,B_j$ be two subsets of equal size sampled from the datasets $\Dd_k$ and $\Dd_j$, respectively. 
Assume to repeat the sampling procedure $R$ times.
Then, for some notion of distance $d$ and feature extractor $\phi$, compute the \textit{finite-sample bias}
\begin{equation}
    \bar{\nu}_{k,j} \;=\; \frac{1}{R}\sum_{i=1}^{R} \hat{\nu}_{k,j}^{(i)},
    \qquad \text{where} \quad
    \hat{\nu}_{k,j}^{(i)} \;=\; d\bigl(\phi(A_k^{(i)}),\;
    \phi(B_j^{(i)})\bigr). 
\end{equation}
Then, the \textit{coefficient of variation} between the standard deviation $\sigma(\cdot)$ of each finite-sample bias and its absolute value is defined as
\begin{equation}
    \mathrm{CV}_{k,j} \;=\;
    \frac{\sigma\!\bigl(\hat{\nu}_{k,j}^{(1)}, \dots
    \hat{\nu}_{k,j}^{(R)}\bigr)}{|\bar{\nu}_{k,j}|}.
\end{equation}
\end{definition}
\begin{definition}[Cross-dataset Consistency]
Let $\tau \in \Tt$ be a degradation applied with different severity $s \in \mathcal{S}$ to some dataset $\Dd_k$ and define the degradation signal $\Delta_k(\tau, s)$ as in \Cref{eq:degrad_sens}. 
Then, given ($\Dd_j,\Dd_k$) a pair of distinct datasets, the log-ratio stability across domains is defined as
\begin{equation}
    L_{jk}(\tau, s) = \log_2 \frac{\Delta_j(\tau, s)}{\Delta_k(\tau, s)}.
\end{equation}
The \emph{cross-dataset consistency} is the mean absolute log-ratio across degradations and datasets:
\begin{equation}
    \Lambda(d) = \frac{1}{M} \sum_{\tau, s, j < k} |L_{jk}(\tau, s)|,
\end{equation}
where $M = |\Tt| \cdot |\mathcal{S}| \cdot \binom{K}{2}$, and $d$ is the distance in $\Delta_k(\tau,s)$. Lower values of $\Lambda$ indicate more consistent metric responses across domains.
Additionally, we distinguish correctable from uncorrectable biases by estimating the \textit{interaction variance}: 
$V_{jk}=\text{Var}_{\tau,s}[L_{j,k}(\tau,s)]$.
Low $V_{jk}$ indicates a mostly multiplicative dataset bias, which could in principle be corrected by normalisation.
High $V_{jk}$ indicates an interaction bias, meaning that the metric response depends jointly on the dataset and the degradation type.
\end{definition}
\textbf{Intra-dataset Finite-sample Bias.} 
We assess how each metric behaves when no distribution shift is present.
For each source dataset $\Dd_k$, we generate $R{=}20$ random splits $A_k^{(i)},B_k^{(i)} {\subseteq} \ \Dd_k,\, i{=}\{1,\dots,R\}$.
Since $A_k$ and $B_k$ are drawn from the same dataset, a reliable metric should return values close to zero and remain consistent across splits.
For each $i$, we compute the finite-sample bias of metric $\bar{\nu}_{k,k}$ and the intra-dataset coefficient of variation $\mathrm{CV}_{k,k}$.

\textbf{Sample Efficiency and Scalability.}
We assess metric stability and runtime as sample size increases, reporting results under Gaussian blur corruption at severity level 2 as an illustrative example.
For subsets $A_k \subseteq \Dd_k$ and $B_k \subseteq \tilde{\Dd}_k$, the minimum reliable $N$ is the smallest size yielding stable estimates at acceptable computational cost.
 
\textbf{Cross-dataset Consistency.} 
\label{subsubsec:domain_consistency}
To measure how the metrics' response varies across domains and degradations, we extend the cross-dataset evaluation of~\cite{veeramachenenifrechet} and the distortion sensitivity analysis of~\cite{jayasumana2024rethinking} into a single comprehensive framework. 
We estimate degradation signal across domains and evaluate each metric over all the degradations and severities. 

\subsubsection{Consistency Across Coarse-to-Fine Generation}
\label{subsubsec:refine_sens}
We evaluate \ours throughout the generative process to assess responsiveness to evolving image fidelity. 
We sample intermediate images from Stable Diffusion v1.5, specifically monitoring the final steps where updates restrict to high-frequency components and localized refinements (\Cref{app:fig:generation_refinement}).
\begin{definition}[Refinement sensitivity]
Given $\Dd$ a dataset of pairs of images and captions, sample a subset of images $\mathcal{I}_{\text{real}} = \{I_{\text{real}}^{(j)}\}_{j=1}^{N}$ and a subset of captions $\mathcal{C} = \{c_i\}_{i=1}^{N}$. 
Let $f$ be a text-conditioned model with $T$ generation steps, and denote the resulting set of synthetic images as
\begin{equation}
\mathcal{I}_{\text{gen}} =\{I_{\text{gen}}^{(i)}\}_{i=1}^{N} \quad \text{where} \quad I_{\text{gen}}^{(i)}:=f(c_i; T) \quad \forall i \in \{1, \dots, N\}.
\end{equation}
Then, for some notion of distance $d$ and feature extractor $\phi$, the refinement sensitivity is defined as 
\begin{equation}
\Delta_{gen}(t) = d\bigl(\phi({\mathcal{I}_{\text{gen}}(t)}), \phi({\mathcal{I}_{\text{real}}})\bigr) \quad t\in[0,T].
\end{equation}
\end{definition}
We choose $N{=}5{,}000$, $T{=}100$, and $t \in \{5, 10, 20, 30, 50, 75, 95, 96, 97, 98, 99\}$.
A well-behaved metric should exhibit continuous improvement as $t$ grows, reflecting the monotonic increase in image fidelity, and be capable of capturing more subtle perceptual enhancements of final stages.

\subsubsection{Human Perceptual Correlation Study} \label{subsubsec:human}

To quantify how \ours aligns with human judgements, we conduct a single-stimulus perceptual study in which annotators are shown a corrupted image together with its original reference and rate the perceived degradation on a 1--5 scale.
We oversample stimuli from five datasets, with $N{=}70$ samples per condition, obtaining a pool of $10{,}850$ images covering five pixel-space degradations and latent distortions across multiple severities.
We collected $30$ ratings from volunteer participants, then aggregated into a Mean Opinion Score~\citep{streijl2016mean} for each stimulus and compared against metric predictions.
Alignment is assessed using Spearman $\rho$~\citep{spearman_ro} and Kendall $\tau$~\citep{kendall_tau}. 
Further experimental details are provided in Appendix~\ref{app:human}.
\section{Discussion} \label{sec:discussion}
In this section, we report the results of the evaluation protocol above (see Appendix for more details). 

\textbf{Intra- and Cross-dataset Stability.}
\begin{table}[]
\vspace{-0.42in}
\centering
\caption{\textbf{Intra- and cross-dataset stability.}
We report the finite-sample bias \(\bar{\nu}_{k,k}\) over random intra-dataset splits, with CV in parentheses, together with the cross-domain consistency \(\Lambda\) and mean interaction variance \(V_{\text{m}}\).
Best trade-off in \textbf{bold}, second best \underline{underlined}.
}
\label{tab:stability-merged}
\scriptsize
\setlength{\tabcolsep}{3pt}
\begin{tabular}{l@{\hspace{8pt}}lllll@{\hspace{8pt}}cc}
\toprule
& \multicolumn{5}{c}{\textbf{Intra-dataset} \(\bar{\nu}_{k,k}\) (CV)}
& \multicolumn{2}{c}{\textbf{Cross-dataset}} \\
\cmidrule(lr){2-6}
\cmidrule(lr){7-8}
Metric
& CelebA-HQ
& ChestX-ray
& COCO-30k
& HAM10000
& RESISC45
& \(\Lambda \downarrow\)
& \(V_{\text{m}} \downarrow\) \\
\midrule
FID
& 5.73 (0.0098)
& 3.84 (0.0065)
& 13.14 (0.0068)
& 5.79 (0.0117)
& 10.64 (0.0084)
& 1.897
& 3.48 \\

KID\(^\ast\)
& -7.28e-6 (3.23)
& -1.34e-5 (1.36)
& 1.12e-5 (3.67)
& 7.47e-6 (6.11)
& 1.02e-5 (3.75)
& 1.989
& 3.72 \\

CMMD
& 2.99e-1 (0.1029)
& 7.10e-2 (0.2823)
& 3.22e-1 (0.0615)
& 1.78e-1 (0.2620)
& 2.31e-1 (0.1621)
& 1.897
& 6.40 \\

DINO+MMD\(^\ast\)
& 4.34e-6 (7.68)
& -4.83e-6 (5.01)
& 1.83e-6 (5.28)
& 4.46e-6 (5.43)
& 1.90e-6 (9.56)
& 3.136
& 3.34 \\

\rowcolor{gray!10}
\ours-CLIP
& \underline{1.54e-2 (0.0310)}
& \underline{6.61e-3 (0.0483)}
& \underline{1.67e-2 (0.0230)}
& \underline{9.74e-3 (0.0614)}
& \underline{1.23e-2 (0.0414)}
& \textbf{0.936}
& \underline{1.52} \\

\rowcolor{gray!10}
\ours-DINO
& \textbf{5.32e-4 (0.0587)}
& \textbf{1.44e-4 (0.0465)}
& \textbf{6.38e-4 (0.0233)}
& \textbf{4.14e-4 (0.0497)}
& \textbf{5.44e-4 (0.0378)}
& \underline{1.248}
& \textbf{1.37} \\
\bottomrule
\end{tabular}
\\
{\scriptsize
\(^\ast\)CV is unreliable when \(\bar{\nu}_{k,k}\) is close to zero, since the coefficient of variation is poorly conditioned.}
\vspace{-0.15in}
\end{table}

\Cref{tab:stability-merged} reports intra- and cross-dataset stability for each metric.
KID and DINO+MMD, as unbiased estimators, force $\bar{\nu}$ near zero but at the cost of extreme variance ($\mathrm{CV} {>} 3$), making individual measurements unreliable.
FID achieves the lowest CV, but this reflects the smoothness of the Gaussian parametric fit rather than distributional fidelity.
CMMD shows moderate intra-dataset stability but the highest cross-dataset bias among foundation model metrics, indicating that its domain sensitivity is both large and degradation-dependent.
The two \ours{} variants provide the most balanced profile: low intra-dataset CV (${\leq} 0.06$), the lowest $\Lambda$ among distributional metrics, and no parametric assumptions.
See Appendix \ref{sec:app_intradataset} for per-dataset breakdowns.

\textbf{Sample Efficiency and Scalability.}
Figure~\ref{fig:complexity_grid} reports convergence and runtime versus sample size under Gaussian blur. 
While FID and KID require larger pools to stabilise and CMMD exhibits strong domain sensitivity, both \ours{} metrics converge rapidly, providing reliable estimates with significantly fewer samples. 
Moreover, although feature extraction scales linearly for all backbones, distance computation strictly favours \ours{} ($O(N\log N)$) over the quadratic cost of MMD variants.

\begin{figure}[htbp!]
  \centering
  \includegraphics[width=0.85\textwidth]{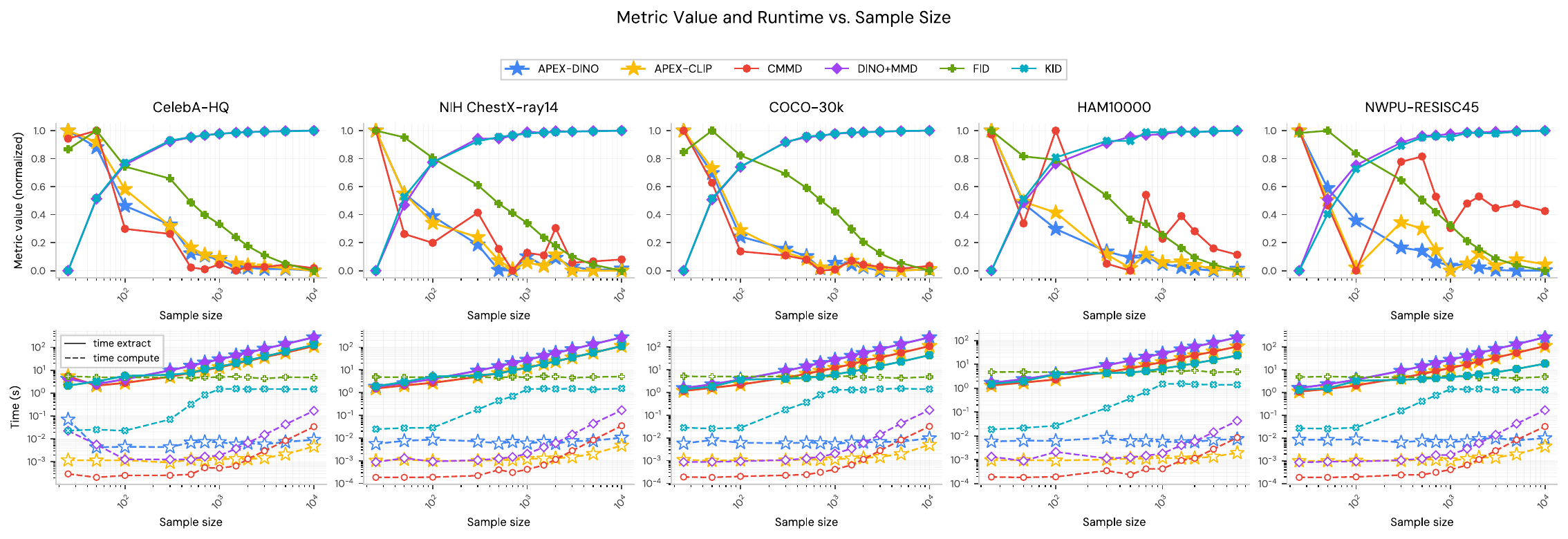}
  \caption{\textbf{Sample complexity and runtime.} (\textbf{Top}) \textcolor[HTML]{4563EB}{APEX-DINO} and \textcolor[HTML]{F59E0B}{APEX-CLIP} stabilise reliably by $N{\approx}500$, avoiding \textcolor[HTML]{65A30D}{FID}'s low-data overestimation and \textcolor[HTML]{DC2626}{CMMD}'s cross-domain instability. 
  (\textbf{Bottom}) Extraction (solid) vs.\ computation (dashed) times. APEX computation scales as $O(N \log N)$, bypassing the prohibitive $O(N^2)$ overhead of MMD-based baselines. Zoom in for better visualisation.}
  \label{fig:complexity_grid}
\end{figure}
\vspace{-10pt}

\paragraph{Degradation sensitivity.} 
~\Cref{app:fig:sensitivity} in Appendix~\ref{app:fig:degradations} shows \ours{} consistently display monotonic sensitivity to progressive perturbations across domains, bypassing the domain-sensitive fluctuations of the baselines. Furthermore, our per-layer analysis of \ours-DINO (\Cref{app:fig:sensitivity_apexdino}) highlights a hierarchical degradation response: low-level corruptions affect shallow layers, structural changes impact intermediate ones, latent distortions disrupt visual representation uniformly across layers.

\paragraph{Consistency Across Coarse-to-Fine Generation.} 
\Cref{fig:progressive_generation_metrics} in Appendix~\ref{sec:app:img_refinem} shows the metrics' responses throughout the generative process of Stable Diffusion v1.5. 
As timesteps increase up to $t{=}95$, all metrics show a clear downward trajectory, consistently reflecting an improvement in generative fidelity.
Focusing on the final stages of generation, \ours-CLIP draws the most significant drop, suggesting higher sensibility to subtle perceptual refinements. 
While \ours-DINO and CMMD roughly exhibit the same trend, FID and DINO+MMD seem to conclude with a slight increase.

\begin{wraptable}{r}{0.48\textwidth}
\vspace{-0.2in}
\centering
\caption{
\textbf{Correlation with human perception.}
We report Spearman \(\rho\) and Kendall \(\tau\) indices (with p-values).
Rows sorted by decreasing \(\rho\).}
\label{tab:metric-correlations-userstudy}
\scriptsize
\setlength{\tabcolsep}{3pt}
\begin{tabular}{lcccc}
\toprule
\textbf{Metric}
& \(\boldsymbol{\rho}\uparrow\)
& \(p_s\)
& \(\boldsymbol{\tau}\uparrow\)
& \(p_\tau\) \\
\midrule
FID
& \(0.847\)
& \(2.85e{-79}\)
& \(0.669\)
& \(9.43e{-60}\) \\
\rowcolor{gray!10}
APEX-CLIP
& \(0.844\)
& \(3.59e{-78}\)
& \(0.669\)
& \(6.38e{-60}\) \\
CMMD
& \(0.813\)
& \(3.93e{-68}\)
& \(0.636\)
& \(2.60e{-54}\) \\
\rowcolor{gray!10}
APEX-DINO
& \(0.811\)
& \(1.06e{-67}\)
& \(0.628\)
& \(4.99e{-53}\) \\
KID
& \(0.806\)
& \(5.42e{-66}\)
& \(0.621\)
& \(1.07e{-51}\) \\
DINO+MMD
& \(0.651\)
& \(1.17e{-35}\)
& \(0.482\)
& \(6.37e{-32}\) \\
\bottomrule
\end{tabular}
\vspace{-0.15in}
\end{wraptable}

\paragraph{Human Perceptual Correlation Study.}
\Cref{tab:metric-correlations-userstudy} reports the results of the human perceptual correlation study.
Among the distribution-based metrics, \ours-CLIP proves to be highly competitive, nearly matching FID's performance and clearly improving over CMMD. 
Notably, \ours-DINO significantly outperforms DINO+MMD, suggesting the projection-based framework extracts more perceptually relevant information from the same embedding space.

\paragraph{Limitations.}
\label{subsubsec:limitations}
Although APEX reduces reliance on 
distributional assumptions and kernel hyperparameters, it still depends on the quality and suitability of the embedding space.
Moreover, despite the efficiency of SWD, the overall cost of APEX is 
dominated by feature extraction.
This is particularly relevant for APEX-DINO, which leverages high-dimensional concatenated DINOv2 representations.
The human perceptual study suggests that APEX-CLIP is competitive with strong baselines, but larger and more diverse annotations pools could draw stronger conclusions on 
perceptual alignment.%
\section{Conclusions} \label{sec:conclusion}
In this work we introduced APEX, an image quality assessment framework to address the sensitivity to rigid parametric assumptions and restricted feature representations of existing metrics.
By leveraging the SWD 
within the latent spaces of large foundation models, APEX provides highly scalable and robust evaluations. 
We investigate the minimal projections required for a stable SWD estimation, both providing theoretical guarantees and empirically identifying a trade-off between the metric's stability and computation time.
Finally, extensive benchmarking and a dedicated user study demonstrate that \ours matches or improves over established baselines in human perceptual correlation, while offering substantially better cross-domain consistency and lower assumption burden.
Future work will explore the extension of APEX to broader multimodal models, as well as adaptive projection strategies---sampling primarily along directions of maximal divergence---to further optimize efficiency.%
\bibliography{references}

\begin{thebibliography}{40}
\providecommand{\natexlab}[1]{#1}
\providecommand{\url}[1]{\texttt{#1}}
\expandafter\ifx\csname urlstyle\endcsname\relax
  \providecommand{\doi}[1]{doi: #1}\else
  \providecommand{\doi}{doi: \begingroup \urlstyle{rm}\Url}\fi

\bibitem[Bi{\'n}kowski et~al.(2018)Bi{\'n}kowski, Sutherland, Arbel, and Gretton]{binkowski2018demystifying}
Miko{\l}aj Bi{\'n}kowski, Danica~J Sutherland, Michael Arbel, and Arthur Gretton.
\newblock Demystifying mmd gans.
\newblock In \emph{International Conference on Learning Representations}, 2018.

\bibitem[Bischoff et~al.(2024)Bischoff, Darcher, Deistler, Gao, Gerken, Gloeckler, Haxel, Kapoor, Lappalainen, Macke, et~al.]{bischoff2024practical}
Sebastian Bischoff, Alana Darcher, Michael Deistler, Richard Gao, Franziska Gerken, Manuel Gloeckler, Lisa Haxel, Jaivardhan Kapoor, Janne~K Lappalainen, Jakob~H Macke, et~al.
\newblock A practical guide to sample-based statistical distances for evaluating generative models in science.
\newblock \emph{Transactions on Machine Learning Research}, 2024.

\bibitem[Cheng et~al.(2017)Cheng, Han, and Lu]{cheng2017nwpu}
Gong Cheng, Junwei Han, and Xiaoqiang Lu.
\newblock Nwpu-resisc45: An open, large-scale dataset for scene classification in high-resolution remote sensing images.
\newblock \emph{Proceedings of the IEEE}, 105\penalty0 (1):\penalty0 145--166, 2017.

\bibitem[Deng et~al.(2009)Deng, Dong, Socher, Li, Li, and Fei-Fei]{deng2009imagenet}
Jia Deng, Wei Dong, Richard Socher, Li-Jia Li, Kai Li, and Li~Fei-Fei.
\newblock Imagenet: A large-scale hierarchical image database.
\newblock In \emph{2009 IEEE conference on computer vision and pattern recognition}, pages 248--255. Ieee, 2009.

\bibitem[Dhariwal and Nichol(2021)]{dhariwal2021diffusion}
Prafulla Dhariwal and Alexander Nichol.
\newblock Diffusion models beat {GANs} on image synthesis.
\newblock In \emph{Advances in Neural Information Processing Systems}, volume~34, pages 8780--8794, 2021.

\bibitem[Goodfellow et~al.(2014)Goodfellow, Pouget-Abadie, Mirza, Xu, Warde-Farley, Ozair, Courville, and Bengio]{goodfellow2014generative}
Ian Goodfellow, Jean Pouget-Abadie, Mehdi Mirza, Bing Xu, David Warde-Farley, Sherjil Ozair, Aaron Courville, and Yoshua Bengio.
\newblock Generative adversarial nets.
\newblock In \emph{Advances in Neural Information Processing Systems}, volume~27, 2014.

\bibitem[Heusel et~al.(2017)Heusel, Ramsauer, Unterthiner, Nessler, and Hochreiter]{heusel2017gans}
Martin Heusel, Hubert Ramsauer, Thomas Unterthiner, Bernhard Nessler, and Sepp Hochreiter.
\newblock Gans trained by a two time-scale update rule converge to a local nash equilibrium.
\newblock \emph{Advances in neural information processing systems}, 30, 2017.

\bibitem[Ho et~al.(2020)Ho, Jain, and Abbeel]{ho2020denoising}
Jonathan Ho, Ajay Jain, and Pieter Abbeel.
\newblock Denoising diffusion probabilistic models.
\newblock In \emph{Advances in Neural Information Processing Systems}, volume~33, pages 6840--6851, 2020.

\bibitem[Hoeffding(1963)]{hoeffding1963probability}
Wassily Hoeffding.
\newblock Probability inequalities for sums of bounded random variables.
\newblock \emph{Journal of the American statistical association}, 58\penalty0 (301):\penalty0 13--30, 1963.

\bibitem[Jayasumana et~al.(2024)Jayasumana, Ramalingam, Veit, Glasner, Chakrabarti, and Kumar]{jayasumana2024rethinking}
Sadeep Jayasumana, Srikumar Ramalingam, Andreas Veit, Daniel Glasner, Ayan Chakrabarti, and Sanjiv Kumar.
\newblock Rethinking fid: Towards a better evaluation metric for image generation, 2024.

\bibitem[Karras et~al.(2018)Karras, Aila, Laine, and Lehtinen]{karras2018progressive}
Tero Karras, Timo Aila, Samuli Laine, and Jaakko Lehtinen.
\newblock Progressive growing of {GAN}s for improved quality, stability, and variation.
\newblock In \emph{International Conference on Learning Representations}, 2018.
\newblock URL \url{https://openreview.net/forum?id=Hk99zCeAb}.

\bibitem[Karras et~al.(2019)Karras, Laine, and Aila]{karras2019style}
Tero Karras, Samuli Laine, and Timo Aila.
\newblock A style-based generator architecture for generative adversarial networks.
\newblock In \emph{Proceedings of the IEEE/CVF Conference on Computer Vision and Pattern Recognition}, pages 4401--4410, 2019.

\bibitem[Kendall(1938)]{kendall_tau}
M.~G. Kendall.
\newblock A new measure of rank correlation.
\newblock \emph{Biometrika}, 30\penalty0 (1-2):\penalty0 81--93, 06 1938.
\newblock ISSN 0006-3444.
\newblock \doi{10.1093/biomet/30.1-2.81}.
\newblock URL \url{https://doi.org/10.1093/biomet/30.1-2.81}.

\bibitem[Kwon et~al.(2026)Kwon, Yang, and Chae]{kwon2026evaluating}
Hyeok~Kyu Kwon, Jaeseung Yang, and Minwoo Chae.
\newblock Evaluating image generation models via sliced wasserstein distance.
\newblock \emph{Journal of the Korean Statistical Society}, pages 1--21, 2026.

\bibitem[Lin et~al.(2014)Lin, Maire, Belongie, Hays, Perona, Ramanan, Doll{\'a}r, and Zitnick]{lin2014microsoft}
Tsung-Yi Lin, Michael Maire, Serge Belongie, James Hays, Pietro Perona, Deva Ramanan, Piotr Doll{\'a}r, and C~Lawrence Zitnick.
\newblock Microsoft coco: Common objects in context.
\newblock In \emph{European conference on computer vision}, pages 740--755. Springer, 2014.

\bibitem[Liu et~al.(2015)Liu, Luo, Wang, and Tang]{liu2015deep}
Ziwei Liu, Ping Luo, Xiaogang Wang, and Xiaoou Tang.
\newblock Deep learning face attributes in the wild.
\newblock In \emph{Proceedings of the IEEE international conference on computer vision}, pages 3730--3738, 2015.

\bibitem[Oquab et~al.(2024)Oquab, Darcet, Moutakanni, Vo, Szafraniec, Khalidov, Fernandez, HAZIZA, Massa, El-Nouby, Assran, Ballas, Galuba, Howes, Huang, Li, Misra, Rabbat, Sharma, Synnaeve, Xu, Jegou, Mairal, Labatut, Joulin, and Bojanowski]{oquab2024dinov}
Maxime Oquab, Timoth{\'e}e Darcet, Th{\'e}o Moutakanni, Huy~V. Vo, Marc Szafraniec, Vasil Khalidov, Pierre Fernandez, Daniel HAZIZA, Francisco Massa, Alaaeldin El-Nouby, Mido Assran, Nicolas Ballas, Wojciech Galuba, Russell Howes, Po-Yao Huang, Shang-Wen Li, Ishan Misra, Michael Rabbat, Vasu Sharma, Gabriel Synnaeve, Hu~Xu, Herve Jegou, Julien Mairal, Patrick Labatut, Armand Joulin, and Piotr Bojanowski.
\newblock {DINO}v2: Learning robust visual features without supervision.
\newblock \emph{Transactions on Machine Learning Research}, 2024.
\newblock ISSN 2835-8856.
\newblock URL \url{https://openreview.net/forum?id=a68SUt6zFt}.
\newblock Featured Certification.

\bibitem[Peebles and Xie(2023)]{peebles2023scalable}
William Peebles and Saining Xie.
\newblock Scalable diffusion models with transformers.
\newblock In \emph{Proceedings of the IEEE/CVF International Conference on Computer Vision}, pages 4199--4209, 2023.

\bibitem[Peyr{\'e} et~al.(2019)Peyr{\'e}, Cuturi, et~al.]{peyre2019computational}
Gabriel Peyr{\'e}, Marco Cuturi, et~al.
\newblock Computational optimal transport: With applications to data science.
\newblock \emph{Foundations and Trends{\textregistered} in Machine Learning}, 11\penalty0 (5-6):\penalty0 355--607, 2019.

\bibitem[Rabin et~al.(2011)Rabin, Peyr{\'e}, Delon, and Bernot]{rabin2011wasserstein}
Julien Rabin, Gabriel Peyr{\'e}, Julie Delon, and Marc Bernot.
\newblock Wasserstein barycenter and its application to texture mixing.
\newblock In \emph{International conference on scale space and variational methods in computer vision}, pages 435--446. Springer, 2011.

\bibitem[Radford et~al.(2021)Radford, Kim, Hallacy, Ramesh, Goh, Agarwal, Sastry, Askell, Mishkin, Clark, et~al.]{radford2021learning}
Alec Radford, Jong~Wook Kim, Chris Hallacy, Aditya Ramesh, Gabriel Goh, Sandhini Agarwal, Girish Sastry, Amanda Askell, Pamela Mishkin, Jack Clark, et~al.
\newblock Learning transferable visual models from natural language supervision.
\newblock In \emph{International conference on machine learning}, pages 8748--8763. PmLR, 2021.

\bibitem[Ramesh et~al.(2022)Ramesh, Dhariwal, Nichol, Chu, and Chen]{ramesh2022hierarchical}
Aditya Ramesh, Prafulla Dhariwal, Alex Nichol, Casey Chu, and Mark Chen.
\newblock Hierarchical text-conditional image generation with {CLIP} latents.
\newblock \emph{arXiv preprint arXiv:2204.06125}, 2022.

\bibitem[Rombach et~al.(2022)Rombach, Blattmann, Lorenz, Esser, and Ommer]{rombach2022high}
Robin Rombach, Andreas Blattmann, Dominik Lorenz, Patrick Esser, and Bj{\"o}rn Ommer.
\newblock High-resolution image synthesis with latent diffusion models.
\newblock In \emph{Proceedings of the IEEE/CVF conference on computer vision and pattern recognition}, pages 10684--10695, 2022.

\bibitem[Salimans et~al.(2016)Salimans, Goodfellow, Zaremba, Cheung, Radford, and Chen]{salimans2016improved}
Tim Salimans, Ian Goodfellow, Wojciech Zaremba, Vicki Cheung, Alec Radford, and Xi~Chen.
\newblock Improved techniques for training gans.
\newblock \emph{Advances in neural information processing systems}, 29, 2016.

\bibitem[Sohl-Dickstein et~al.(2015)Sohl-Dickstein, Weiss, Maheswaranathan, and Ganguli]{sohl2015deep}
Jascha Sohl-Dickstein, Eric Weiss, Niru Maheswaranathan, and Surya Ganguli.
\newblock Deep unsupervised learning using nonequilibrium thermodynamics.
\newblock In \emph{International Conference on Machine Learning}, pages 2256--2265. PMLR, 2015.

\bibitem[Song and Ermon(2019)]{song2019generative}
Yang Song and Stefano Ermon.
\newblock Generative modeling by estimating gradients of the data distribution.
\newblock In \emph{Advances in Neural Information Processing Systems}, volume~32, 2019.

\bibitem[Song et~al.(2021)Song, Sohl-Dickstein, Kingma, Kumar, Ermon, and Poole]{song2021score}
Yang Song, Jascha Sohl-Dickstein, Diederik~P Kingma, Arvind Kumar, Stefano Ermon, and Ben Poole.
\newblock Score-based generative modeling through stochastic differential equations.
\newblock In \emph{International Conference on Learning Representations}, 2021.

\bibitem[Spearman(1904)]{spearman_ro}
C.~Spearman.
\newblock The proof and measurement of association between two things.
\newblock \emph{The American Journal of Psychology}, 15\penalty0 (1):\penalty0 72--101, 1904.
\newblock ISSN 00029556.
\newblock URL \url{http://www.jstor.org/stable/1412159}.

\bibitem[Stein et~al.(2023)Stein, Cresswell, Hosseinzadeh, Sui, Ross, Villecroze, Liu, Caterini, Taylor, and Loaiza-Ganem]{stein2023exposing}
George Stein, Jesse Cresswell, Rasa Hosseinzadeh, Yi~Sui, Brendan Ross, Valentin Villecroze, Zhaoyan Liu, Anthony~L Caterini, Eric Taylor, and Gabriel Loaiza-Ganem.
\newblock Exposing flaws of generative model evaluation metrics and their unfair treatment of diffusion models.
\newblock \emph{Advances in Neural Information Processing Systems}, 36:\penalty0 3732--3784, 2023.

\bibitem[Streijl et~al.(2016)Streijl, Winkler, and Hands]{streijl2016mean}
Robert~C Streijl, Stefan Winkler, and David~S Hands.
\newblock Mean opinion score (mos) revisited: methods and applications, limitations and alternatives.
\newblock \emph{Multimedia Systems}, 22\penalty0 (2):\penalty0 213--227, 2016.

\bibitem[Szegedy et~al.(2016)Szegedy, Vanhoucke, Ioffe, Shlens, and Wojna]{szegedy2016rethinking}
Christian Szegedy, Vincent Vanhoucke, Sergey Ioffe, Jon Shlens, and Zbigniew Wojna.
\newblock Rethinking the inception architecture for computer vision.
\newblock In \emph{Proceedings of the IEEE conference on computer vision and pattern recognition}, pages 2818--2826, 2016.

\bibitem[Tanchenko(2014)]{tanchenko2014visual}
Alexander Tanchenko.
\newblock Visual-psnr measure of image quality.
\newblock \emph{Journal of Visual Communication and Image Representation}, 25\penalty0 (5):\penalty0 874--878, 2014.

\bibitem[Theis et~al.(2016)Theis, van~den Oord, and Bethge]{TheisOB15}
Lucas Theis, Aäron van~den Oord, and Matthias Bethge.
\newblock A note on the evaluation of generative models.
\newblock In Yoshua Bengio and Yann LeCun, editors, \emph{ICLR}, 2016.
\newblock URL \url{http://dblp.uni-trier.de/db/conf/iclr/iclr2016.html#TheisOB15}.

\bibitem[Tschandl et~al.(2018)Tschandl, Rosendahl, and Kittler]{tschandl2018ham10000}
Philipp Tschandl, Cliff Rosendahl, and Harald Kittler.
\newblock The ham10000 dataset, a large collection of multi-source dermatoscopic images of common pigmented skin lesions.
\newblock \emph{Scientific data}, 5\penalty0 (1):\penalty0 180161, 2018.

\bibitem[Veeramacheneni et~al.(2025)Veeramacheneni, Wolter, Kuehne, and Gall]{veeramachenenifrechet}
Lokesh Veeramacheneni, Moritz Wolter, Hilde Kuehne, and Juergen Gall.
\newblock Fr{\'e}chet wavelet distance: A domain-agnostic metric for image generation.
\newblock In \emph{The Thirteenth International Conference on Learning Representations}, 2025.

\bibitem[Wang et~al.(2017)Wang, Peng, Lu, Lu, Bagheri, and Summers]{chestxray2017wang}
Xiaosong Wang, Yifan Peng, Le~Lu, Zhiyong Lu, Mohammadhadi Bagheri, and Ronald Summers.
\newblock Chestx-ray14: Hospital-scale chest x-ray database and benchmarks on weakly-supervised classification and localization of common thorax diseases.
\newblock 09 2017.

\bibitem[Wang and Bovik(2009)]{wang2009mean}
Zhou Wang and Alan~C Bovik.
\newblock Mean squared error: Love it or leave it? a new look at signal fidelity measures.
\newblock \emph{IEEE signal processing magazine}, 26\penalty0 (1):\penalty0 98--117, 2009.

\bibitem[Wang et~al.(2004)Wang, Bovik, Sheikh, and Simoncelli]{wang2004image}
Zhou Wang, Alan~C Bovik, Hamid~R Sheikh, and Eero~P Simoncelli.
\newblock Image quality assessment: from error visibility to structural similarity.
\newblock \emph{IEEE transactions on image processing}, 13\penalty0 (4):\penalty0 600--612, 2004.

\bibitem[Yang et~al.(2023)Yang, Zhang, Song, Hong, Xu, Zhao, Zhang, Cui, and Yang]{yang2023diffusion}
Ling Yang, Zhilong Zhang, Yang Song, Shenda Hong, Runsheng Xu, Yue Zhao, Wentao Zhang, Bin Cui, and Ming-Hsuan Yang.
\newblock Diffusion models: A comprehensive survey of methods and applications.
\newblock \emph{ACM Computing Surveys}, 56\penalty0 (4):\penalty0 1--39, 2023.

\bibitem[Zhang et~al.(2018)Zhang, Isola, Efros, Shechtman, and Wang]{zhang2018unreasonable}
Richard Zhang, Phillip Isola, Alexei~A Efros, Eli Shechtman, and Oliver Wang.
\newblock The unreasonable effectiveness of deep features as a perceptual metric.
\newblock In \emph{Proceedings of the IEEE conference on computer vision and pattern recognition}, pages 586--595, 2018.

\end{thebibliography}
\appendix
\section{Proof of Theorem 1} \label{app:numb_proj}
This section provides the proof of \Cref{theorem_num_directions}, reported below for simplicity.
\setcounter{theorem}{0}
\begin{theorem} 
    Let $\mathcal{X}$ be the input space of images and $P,Q \in \mathcal{P}(\mathcal{X})$ two data distributions. 
    Let $\phi: \mathcal{X} \rightarrow \mathbb{R}^d$ be a pre-trained embedding function and assume $\phi(\mathcal{X})=\mathcal{M}$ is a bounded $k$-dimensional Riemannian manifold embedded in $\mathbb{R}^d$, with $k<<d$ and diameter $D$. Then the latent embedding measures are defined as push-forwards $\mu= \phi_\sharp P$ and $\nu= \phi_\sharp Q$.
    Let $SW^2_2(\mu,\nu)$ be the true squared Sliced Wasserstein Distance and $\widehat{SW^2_2}(\mu,\nu)$ its Monte Carlo estimate using $L$ uniformly sampled independent projections.
    Then, for any tolerance $\tau>0$ and failure probability $0<\delta<1$, if 
    \begin{equation}
        L \geq \frac{2D^4}{\tau^2} \left[ 2k \log \left( \frac{8CD^2}{\tau} \right) - \log\left( \frac{\delta}{2} \right) \right]
    \end{equation}
    the total estimation error is controlled by $\tau$ with probability at least $1-\delta$ (where $C$ is some constant depending on the manifold $\mathcal{M}$).
\end{theorem}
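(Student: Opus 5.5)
The idea is to treat the Monte Carlo estimate as an empirical average of a bounded random variable and apply Hoeffding's inequality \citep{hoeffding1963probability}. The resulting $\log(2/\delta)$ term gets a dimension factor from a covering argument that lives on the $k$-dimensional manifold $\mathcal{M}$ rather than on the ambient space $\mathbb{R}^d$. Write $f(\theta)=\mathcal{W}_2^2(\pi_{\theta\sharp}\mu,\pi_{\theta\sharp}\nu)$, so that $SW_2^2=\mathbb{E}_\theta f(\theta)$ and $\widehat{SW_2^2}=\frac1L\sum_{l=1}^L f(\theta_l)$ with $\theta_l$ i.i.d.\ uniform on $\mathbb{S}^{d-1}$.

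\textbf{Step 1 (boundedness).} Both $\mu$ and $\nu$ are supported on $\mathcal{M}$, which has diameter $D$, and $\pi_\theta$ is $1$-Lipschitz. Hence the projected measures are supported on an interval of length at most $D$. Any coupling then moves mass by at most $D$, which gives $0\le f(\theta)\le D^2$.

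\textbf{Step 2 (Lipschitz regularity in $\theta$).} Couple $\pi_{\theta\sharp}\mu$ and $\pi_{\theta'\sharp}\mu$ through the same point $x$. We have $|\langle\theta-\theta',x\rangle|\le \|\theta-\theta'\|\,\mathrm{diam}$, after centering at a point of $\mathcal{M}$; the centering does not change $\mathcal{W}_2$ up to a common shift. Using $|a^2-b^2|\le(a+b)|a-b|$, this gives $|f(\theta)-f(\theta')|\le C' D^2\|\theta-\theta'\|$.

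\textbf{Step 3 (effective dimension).} The point is that $f$ depends on $\theta$ only through the restriction of the linear functional $\langle\theta,\cdot\rangle$ to $\mathcal{M}$. I would argue that the family $\{\pi_\theta|_{\mathcal{M}}\}$ admits a sup-norm $\epsilon$-net of cardinality $N(\epsilon)\le (C D^2/\epsilon)^{k}$. This should come from the bounded-curvature, intrinsic-dimension-$k$ assumption, since locally $\mathcal{M}$ looks like a $k$-dimensional chart. Squaring the covering to control the pairs of restricted functionals entering the estimate produces the exponent $2k$. Combining this with a union bound over the net and Hoeffding, $\Pr(|\bar f-\mathbb{E}f|>t)\le 2e^{-2Lt^2/D^4}$, and taking $\epsilon=\tau/8$ and $t=\tau/2$, yields
\begin{equation*}
\Pr\bigl(|\widehat{SW_2^2}-SW_2^2|>\tau\bigr)\le 2\Bigl(\tfrac{8CD^2}{\tau}\Bigr)^{2k}\exp\!\Bigl(-\tfrac{L\tau^2}{2D^4}\Bigr).
\end{equation*}
This is at most $\delta$ exactly when $L$ satisfies the stated bound. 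The constants are set so that the discretisation error, which is $2\epsilon$ times the Lipschitz factor, absorbs the remaining $\tau/2$.

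\textbf{Main obstacle.} Step 3 is the step I expect to fail or need repair. A plain Hoeffding bound on a single fixed estimator needs no covering at all: it already gives $L\ge \frac{D^4}{2\tau^2}\log(2/\delta)$, which is independent of both $d$ and $k$. The union bound is only meaningful if one wants a uniform statement, for example over families of measures on $\mathcal{M}$ or over the net of restricted projections. Making the $(CD^2/\tau)^{2k}$ covering rigorous requires a genuine argument that the projections restricted to a curved $k$-manifold have metric entropy of order $k\log(1/\epsilon)$. That argument would go through a reach or curvature bound and local tangent-space approximation, which is where the constant $C$ would enter. I would first try to establish this via finitely many charts of controlled distortion. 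Failing that, I would present the $k$-dependent term as a conservative (looser) uniform bound that still implies the fixed-pair Hoeffding result.
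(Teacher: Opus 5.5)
Your Steps 1--3 follow the paper's own skeleton. The paper first proves a Lipschitz bound for $f$, getting $2D^2$ by reusing the optimal plan for $\theta_i$ (lifted to a coupling of $\mu,\nu$) as a competitor at $\theta_j$. Your triangle-inequality route gives $4D^2$, which is fine once you phrase the centering as a common translation of $\mu$ and $\nu$; that translation leaves $f$ unchanged. The paper then applies Hoeffding with $0\le f\le D^2$ and takes a union bound over an $\epsilon$-net of size $(C/\epsilon)^{2k}$, justifying the exponent $2k$ by the dimension of $\mathcal{M}-\mathcal{M}$. Finally it extends off the net by Lipschitz continuity and sets $\epsilon=\tau/(8D^2)$, $t=\tau/2$.

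You are right that Step 3 is a genuine gap, and the paper's argument has the same gap. First, the net you posit does not exist in general. The map $\theta\mapsto\pi_\theta|_{\mathcal{M}}$ is linear. If $\mathcal{M}$ affinely spans $\mathbb{R}^d$ (e.g.\ the moment curve $t\mapsto(t,t^2,\dots,t^d)$, with $k=1$), then $\theta\mapsto\sup_{x\in\mathcal{M}}|\langle\theta,x-x_0\rangle|$ is a norm on $\mathbb{R}^d$. Covering $\mathbb{S}^{d-1}$ in that norm takes on the order of $\epsilon^{-(d-1)}$ points. The relevant exponent is therefore the dimension of the affine span of $\mathcal{M}$, not its intrinsic dimension, and curvature is exactly what lets a $k$-manifold span more than $k$ dimensions.

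Second, and more fundamentally, there is nothing to make uniform. $\widehat{SW_2^2}-SW_2^2$ is a single scalar random variable whose randomness is the draw of $\theta_1,\dots,\theta_L$; nothing is indexed by a direction. So the union bound, the ``squaring'' to exponent $2k$, and $\epsilon=\tau/8$ are reverse-engineered to hit the target formula rather than derived. In the paper the same problem appears: its net covers difference vectors while its extension step needs a net of directions, and $\widehat{SW_2^2}(\theta_j)$ is never a defined object.

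Your fallback is the real proof and closes the gap, but you need to carry out the comparison you only gesture at. Hoeffding gives $\Pr(|\widehat{SW_2^2}-SW_2^2|>\tau)\le 2e^{-2L\tau^2/D^4}\le\delta$ as soon as $L\ge\frac{D^4}{2\tau^2}\log(2/\delta)$. So it suffices that the stated threshold dominates this one, which hinges on the sign of $\log(8CD^2/\tau)$.

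Increasing $C$ only strengthens the hypothesis on $L$, so you may take $C\ge 1/8$. Then split into two cases:
\begin{itemize}
\item If $\tau\ge D^2$, the conclusion is trivial, since both $\widehat{SW_2^2}$ and $SW_2^2$ lie in $[0,D^2]$.
\item If $\tau<D^2$, then $8CD^2/\tau>1$, so the $k$-term is nonnegative. The stated threshold is then at least $\frac{2D^4}{\tau^2}\log(2/\delta)$, four times what Hoeffding needs.
\end{itemize}
Without some lower bound on $C$, the bracket can be negative and the hypothesis on $L$ becomes vacuous.

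With this check in place, Steps 2 and 3 can be deleted. Neither the Lipschitz lemma nor $k$, $C$ or $d$ is needed. Independence from the ambient dimension is just the dimension-free concentration of a bounded i.i.d.\ average.
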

We observe that both CLIP and DINOv2 have a final $\ell_2$-normalization layer by design, hence the boundedness assumption is completely consistent with our experimental setting. Therefore, we can assume they are supported on a manifold $\mathcal{M}$ with bounded diameter $D$ without loss of generality.
\begin{remark}[Boundedness Assumption] \label{assumption: boundedness}
Consider $x,y$ two images and $\phi(x),\phi(y) \in \mathcal{M}$ the corresponding embeddings via CLIP or DINOv2.
Then, due to the $\ell_2$-normalization layer, we have 
\begin{equation}
\|\phi(x)\|_2 \leq 1 \quad \text{and} \quad \|\phi(y)\|_2 \leq 1
\end{equation}
and the diameter of the manifold $\mathcal{M}$ corresponds to $D=2$.
\end{remark}

Before going into the details of the proof, we need to show the following additional result.
\begin{lemma} \label{prop:lipschitz}
Let $\mu$ and $\nu$ be two probability measures supported on a bounded subset of $\mathbb{R}^d$ with maximum diameter $D$. 
Let $f: \mathbb{S}^{d-1} \rightarrow \mathbb{R}$ be such that
\begin{equation}
    f(\theta)= \mathcal{W}_2(\pi_{\theta \sharp}\mu,\pi_{\theta \sharp}\nu)^2.
\end{equation}
Then $f(\theta)$ is globally Lipschitz continuous on $\mathbb{S}^{d-1}$ with constant $C=2D^2$, i.e.
\begin{equation}
    |f(\theta_i)-f(\theta_j)| \leq 2D^2 ||\theta_i - \theta_j||_2 \quad \forall \theta_i, \theta_j \in \mathbb{S}^{d-1}.
\end{equation}
\end{lemma}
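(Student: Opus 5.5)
The plan is to prove the Lemma by writing $\mathcal{W}_2^2$ between one-dimensional projections through a common coupling, and then comparing $f(\theta_i)$ and $f(\theta_j)$ using a coupling that is optimal for one of the two directions.

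\textbf{Centring.} The quantity $f$ is unchanged if both $\mu$ and $\nu$ are translated by the same vector. Both supports lie in one bounded set of diameter $D$, so after translating we may assume $\|x\|\le D$ and $\|y\|\le D$ on the supports. The differences $x-y$ are what matter, and $\|x-y\|\le D$ holds without any translation.

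\textbf{Coupling representation.} For any coupling $\gamma\in\Gamma(\mu,\nu)$ on $\mathbb{R}^d\times\mathbb{R}^d$, the push-forward $(\pi_\theta\times\pi_\theta)_\sharp\gamma$ is a coupling of the two projected measures. Hence
\begin{equation*}
f(\theta)\le \int \langle \theta, x-y\rangle^2 \, d\gamma(x,y).
\end{equation*}
Now take a one-dimensional optimal plan for direction $\theta_j$, namely the monotone quantile coupling. By the gluing lemma it lifts to some $\gamma_j\in\Gamma(\mu,\nu)$ that achieves equality at $\theta_j$:
\begin{equation*}
f(\theta_j)=\int \langle\theta_j,x-y\rangle^2 \, d\gamma_j(x,y).
\end{equation*}
To build $\gamma_j$, disintegrate $\mu$ along $\pi_{\theta_j}$ and $\nu$ likewise, and couple the fibres independently.

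\textbf{Main estimate.} Write $z=x-y$, so that $\|z\|\le D$. Then
\begin{equation*}
f(\theta_i)-f(\theta_j)\le \int \bigl(\langle\theta_i,z\rangle^2-\langle\theta_j,z\rangle^2\bigr)\, d\gamma_j.
\end{equation*}
Factor the integrand as $\langle\theta_i-\theta_j,z\rangle\,\langle\theta_i+\theta_j,z\rangle$. The first factor is at most $\|\theta_i-\theta_j\|D$ in absolute value, and the second is at most $2D$. This gives
\begin{equation*}
f(\theta_i)-f(\theta_j)\le 2D^2\|\theta_i-\theta_j\|.
\end{equation*}
Swapping the roles of $i$ and $j$ gives the reverse inequality, which proves the Lemma with $C=2D^2$.

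\textbf{Main obstacle.} The delicate step is the lifting. We need a coupling in $\mathbb{R}^d$ that is exactly optimal for the projected problem at $\theta_j$, and this requires the gluing or disintegration argument above.

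If we want to avoid that, there is a fallback that uses only the triangle inequality for $\mathcal{W}_2$ on $\mathbb{R}$. We have
\begin{equation*}
\bigl|\mathcal{W}_2(\pi_{\theta_i\sharp}\mu,\pi_{\theta_i\sharp}\nu)-\mathcal{W}_2(\pi_{\theta_j\sharp}\mu,\pi_{\theta_j\sharp}\nu)\bigr|\le \mathcal{W}_2(\pi_{\theta_i\sharp}\mu,\pi_{\theta_j\sharp}\mu)+\mathcal{W}_2(\pi_{\theta_i\sharp}\nu,\pi_{\theta_j\sharp}\nu).
\end{equation*}
Each term on the right is at most $D\|\theta_i-\theta_j\|$, using the diagonal coupling $x\mapsto(\langle\theta_i,x\rangle,\langle\theta_j,x\rangle)$ together with the centred bound $\|x\|\le D$. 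Each $\mathcal{W}_2$ is itself at most $D$, so $|a^2-b^2|=|a-b|(a+b)\le 2D\cdot 2D\|\theta_i-\theta_j\|$. This route gives a Lipschitz constant of $4D^2$ rather than $2D^2$, so I will try the lifting argument first to obtain the sharp constant $2D^2$.
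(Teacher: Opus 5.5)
Your main argument is correct and is essentially the paper's proof: take a plan optimal at one direction, use it as a feasible plan at the other, factor $\langle\theta_i,z\rangle^2-\langle\theta_j,z\rangle^2$ as a difference of squares, bound using $\|z\|\le D$ and $\|\theta_i+\theta_j\|\le 2$, and swap roles. Your explicit gluing/disintegration step, which lifts the one-dimensional optimal plan to a coupling in $\Gamma(\mu,\nu)$, fills in a point the paper assumes without comment when it integrates $|\langle u,\theta\rangle-\langle v,\theta\rangle|^2$ over $\mathbb{R}^d\times\mathbb{R}^d$ against a plan for the projected measures; the centring step is only needed for your fallback, not for this route.
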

\begin{proof}
We recall the definition of Wasserstein distance in this setting becomes
\begin{equation}
    f(\theta)=\mathcal{W}_2(\pi_{\theta \sharp}\mu,\pi_{\theta \sharp}\nu)^2 = \inf_{\gamma \in \Gamma(\pi_{\theta \sharp}\mu,\pi_{\theta \sharp}\nu)} \int_{\mathbb{R}^d \times \mathbb{R}^d} |\langle u, \theta \rangle - \langle v, \theta \rangle|^2 d\gamma(u,v).
\end{equation}
Assume $\gamma_1$ is the optimal transport plan for the direction $\theta_i \in \mathbb{S}^{d-1}$, then
\begin{equation}
    f(\theta_i)= \int_{\mathbb{R}^d \times \mathbb{R}^d}  |\langle u, \theta_i \rangle - \langle v, \theta_i \rangle|^2 d\gamma_1(u,v).
\end{equation}
Therefore, evaluating the cost for some other direction $\theta_j$ using the same plan $\gamma_1$ must be greater than or equal to the true minimum cost for $\theta_j$:
\begin{equation}
    f(\theta_j) \leq \int_{\mathbb{R}^d \times \mathbb{R}^d}  |\langle u, \theta_j \rangle - \langle v, \theta_j \rangle|^2 d\gamma_1(u,v).
\end{equation}
Now, consider the difference between two different directions, i.e.
\begin{align}
    f(\theta_j) -f(\theta_i) \leq & \int_{\mathbb{R}^d \times \mathbb{R}^d}  |\langle u-v, \theta_j \rangle^2 - \langle u-v, \theta_i \rangle|^2 d\gamma_1(u,v)
    \nonumber \\
    \leq &  \int_{\mathbb{R}^d \times \mathbb{R}^d} \langle u-v, \theta_j-\theta_i \rangle \langle u-v, \theta_j+\theta_i \rangle \gamma_1(u,v).
\end{align}
By the Cauchy-Schwartz inequality we get
\begin{align}
    f(\theta_j) -f(\theta_i) \leq & \int_{\mathbb{R}^d \times \mathbb{R}^d}  (\|u-v\| \cdot \| \theta_j-\theta_i\|) (\|u-v\| \cdot \| \theta_j+\theta_i\|) d \gamma_1.
\end{align}
From the boundedness of the support, we have $\|u-v\| \leq D$ for any pair $(u,v)$. 
Additionally, since $\theta_i$ and $\theta_j$ are unit vectors, we have $\|\theta_j+\theta_i\| \leq 2$. Thus, we have
\begin{equation}
    f(\theta_j) - f(\theta_i)| \leq (D \| \theta_j - \theta_i\|_2)(D \cdot 2)= 2D^2\| \theta_j - \theta_i\|_2
\end{equation}
and by reversing the roles of $\theta_i, \theta_j$ with the optimal transport plan for $\theta_j$, we get
\begin{equation}
    f(\theta_i) - f(\theta_j)| \leq (D \| \theta_i - \theta_j\|_2)(D \cdot 2)= 2D^2\| \theta_i - \theta_j\|_2.
\end{equation}
To conclude:
\begin{equation}
    |f(\theta_i) - f(\theta_j)| \leq (D \| \theta_i - \theta_j\|_2)(D \cdot 2)= 2D^2\| \theta_i - \theta_j\|_2 \quad \forall \theta_i, \theta_j \in \mathbb{S}^{d-1}.
\end{equation}
\end{proof}
We can now discuss the proof of \Cref{theorem_num_directions}.
\begin{proof}
Consider $P,Q$ two image data distributions and suppose to apply a pre-trained embedding function $\phi$ (e.g., CLIP or DINOv2) to each sample. Therefore, we get $\mu= \phi_\sharp P$ and $\nu= \phi_\sharp Q$ distributions of the latent representations supported on a bounded $k$-dimensional Riemannian manifold embedded in $\mathbb{R}^d$, with $k<<d$ and diameter $D$.

We recall that we aim to estimate the true (squared) Sliced Wasserstein Distance 
\begin{equation} \label{eq:sliced_wass_integral}
    SW^2_2(\mu,\nu)= \int_{\mathbb{S}^{d-1}} \mathcal{W}_2(\pi_{\theta \sharp}\mu,\pi_{\theta \sharp}\nu)^2 d\theta
\end{equation}
by using a simple Monte Carlo scheme that uniformly draws $L$ sample directions ${\theta_l}$ on $\mathbb{S}^{d-1}$, i.e.
\begin{equation} \label{eq:sliced_wass_sum}
\widehat{SW_2^2}(\mu,\nu)=\frac{1}{L} \sum_{l=1}^L \mathcal{W}_2(\pi_{\theta_l \sharp}\mu,\pi_{\theta_l \sharp}\nu)^2
\end{equation}
where $\pi_{\theta_l}: \mathbb{R}^d \rightarrow \mathbb{R}$ is the projection onto a vector $\theta_l \in \mathbb{S}^{d-1}$ of the $d$-dimensional unit sphere.

Suppose $\phi(x_i), \, \phi(y_i)$ are two latent representations of the images $x_i,y_i$ drawn from $\mu$ and $\nu$, respectively. 
From~\citep{peyre2019computational}, we can rewrite the Wasserstein term inside the SWD estimation as
\begin{equation}
\widehat{SW_2^2}(\mu,\nu)=\frac{1}{L} \sum_{l=1}^L \left(\min_{\sigma \in S_N} \frac{1}{N} \sum_{i=1}^N |\langle \phi(x_i), \theta_l \rangle-\langle \phi(y_{\sigma(i)}), \theta_l \rangle|^2 \right) 
\end{equation}
where $\langle \phi(x_i), \theta_l \rangle$ and $\langle \phi(y_{\sigma(i)}), \theta_l \rangle$ are one-dimensional elements sorted by the permutation $\sigma \in S_N$, the set of all permutations of $N$ elements. 
For any fixed direction $\theta_l$, the increasing order permutation is the optimal one. However, the sorting order changes when $\theta_l$ changes. Thus, we rewrite our problem as
\begin{equation}
\widehat{SW^2_2}(\mu,\nu)=\frac{1}{L} \sum_{l=1}^L \left(\min_{\sigma \in S_N} \frac{1}{N} \sum_{i=1}^N |\langle \phi(x_i)- \phi(y_{\sigma(i)}), \theta_l \rangle|^2 \right)  := \frac{1}{L} \sum_{l=1}^L f(\theta_l)
\end{equation}
where $f(\theta_l): \mathbb{S}^{d-1} \rightarrow \mathbb{R}$ is a $C_\text{Lip}$-Lipschitz function on $\mathbb{S}^{d-1}$, being the minimum of a set of $C_\text{Lip}$-Lipschitz functions, with constant $C_\text{Lip}=2D^2$ from \Cref{prop:lipschitz}.

Therefore, we reduced our $\widehat{SW^2_2}(\mu, \nu)$ to the average of $L$ independent variables $f(\theta_l)$.
Since the distance between any two projected points is strictly bounded by $D$, we have $f(\theta_l) \leq D^2$ and by the Hoeffding's inequality for bounded independent random variables \cite{hoeffding1963probability}, we get
\begin{equation}
    P \left( \left| \frac{1}{L} \sum_{l=1}^L f(\theta_l) - SW^2_2(\mu,\nu) \right| \geq t \right) \leq 2 e^{-2Lt^2/D^4}
\end{equation}
for any fixed $t>0$. 
This effectively represents the failure probability $\delta$, i.e. the probability of approximating the squared SWD with an error grater than $t$. 

In order to give an estimate to this $\widehat{SW^2_2}(\mu, \nu)$, we observe that $f(\theta_l)$ only depends on the difference between $\phi(x_i)$ and $\phi(y_{\sigma(i)})$. 
We call $v$ the difference vector between the two embedding vectors in $\mathcal{M}$. 
Since the support of $\mathcal{M}$ has dimension $k$, the set of all possible difference vectors between any pair of points has dimension at most $2k$.
In other words, the function $f(\theta_l)$ can be described by at most $2k$ meaningful variations. 
Then, we construct $\mathcal{C_\epsilon}$ the $\epsilon$-net covering this $2k$-dimensional subspace. 
The corresponding covering number $\mathcal{N}$ can be expressed as $\mathcal{N}_\epsilon \leq (C/\epsilon)^{2k}$ where $C$ depends on the manifold curvature. 
From this observation, we get the failure probability $\delta$ is
\begin{align} \label{eq:t_is_tau}
    P \left( \sup_{\theta_j \in \mathcal{C}_\epsilon}\left| \widehat{SW^2_2}(\theta_j) - SW^2_2(\theta_j) \right| \geq t \right) & \leq P \left( \cup_{j=1}^{\mathcal{N}}\left| \widehat{SW^2_2}(\theta_j) - SW^2_2(\theta_j) \right| \geq t \right)
     \nonumber \\
     & \leq \sum_{j=1}^{\mathcal{N}} P \left( \left| \widehat{SW^2_2}(\theta_j) - SW^2_2(\theta_j) \right| \geq t \right)
     \nonumber \\
     & \leq \mathcal{N}_\epsilon \cdot 2 e^{-2Lt^2/D^4} \leq \left(\frac{C}{\epsilon}\right)^{2k}\cdot 2 e^{-2Lt^2/D^4} = \delta.
\end{align}
However, this bound holds only for the points in the $\epsilon$-net. 
Instead, we want something for any direction $\theta$ in our space.
Observe that, by definition, for any direction $\theta$ there exists $\theta_j$ the closest direction in the $\epsilon$-net. 
Then, the following holds
\begin{align}
\small \left| \widehat{SW^2_2}(\theta) - SW^2_2(\theta) \right| & \leq \left| \widehat{SW^2_2}(\theta) - SW^2_2(\theta_j) \right| + \left| \widehat{SW^2_2}(\theta_j) - SW^2_2(\theta_j) \right| \nonumber \\
&+ \left| \widehat{SW^2_2}(\theta_j) - SW^2_2(\theta) \right| \nonumber \\
& \leq C_\text{Lip} \cdot \epsilon+\left| \widehat{SW_2^2}(\theta_j) - SW^2_2(\theta_j) \right|+C_\text{Lip} \cdot \epsilon \nonumber \\
& \leq 4\epsilon D^2+\left| \widehat{SW^2_2}(\theta_j) - SW^2_2(\theta_j) \right|
\end{align}
by applying the triangular inequality and the Lipschitz property proved before.
Now, we fix the total approximation error to be at most $\tau$, for some tolerance value $\tau>0$. Therefore
\begin{equation}
\left| \widehat{SW_2^2}(\theta) - SW^2_2(\theta)\right| \leq 4\epsilon D^2+\left| \widehat{SW_2^2}(\theta_j) - SW^2_2(\theta_j)\right| \leq \tau.
\end{equation}
By using a simple trick, we can assign half tolerance to the first term of the middle inequality, i.e.
\begin{align}
4\epsilon D^2 = \frac{\tau}{2} \Rightarrow \epsilon= \frac{\tau}{8D^2} \quad \text{and} \quad \left| \widehat{SW_2^2}(\theta_j) - SW^2_2(\theta_j)\right| \leq \frac{\tau}{2}.
\end{align}
It follows that the probability of failure corresponds to observing
\begin{align}
    \tau <  \left| \widehat{SW_2^2}(\theta) - SW^2_2(\theta)\right| \leq \frac{\tau}{2}+\left| \widehat{SW_2^2}(\theta_j) - SW^2_2(\theta_j)\right| 
\end{align}
hence 
\begin{equation}
    \left| \widehat{SW_2^2}(\theta_j) - SW^2_2(\theta_j)\right| > \frac{\tau}{2}.
\end{equation}
In fact, since we also asked the distance in the $\epsilon$-net to be $t$ in \Cref{eq:t_is_tau}, we get $t=\tau/2$ from the second inequality. 
Therefore, we get
\begin{equation}
    P \left( \left| \frac{1}{L} \sum_{l=1}^L f(\theta_l) - SW^2_2(\mu,\nu) \right| > \tau \right) \leq P \left( \left| \widehat{SW_2^2}(\theta_j) - SW^2_2(\theta_j)\right| > \frac{\tau}{2} \right) \leq \delta.
\end{equation}
Finally, we can plug-in the previous relations to obtain the condition on $L$ such that
\begin{align}
    P \left( \left| \frac{1}{L} \sum_{l=1}^L f(\theta_l) - SW^2_2(\mu,\nu) \right| \leq \tau \right) & \geq 1-\delta
\end{align}
holds. In detail, from \Cref{eq:t_is_tau}, we have that when $t=\tau/2$
\begin{align}
P \left( \left| \widehat{SW_2^2}(\theta_j) - SW^2_2(\theta_j)\right| > \frac{\tau}{2} \right) \leq \left(\frac{C}{\epsilon}\right)^{2k}\cdot 2 e^{-2L\tau^2/4D^4} \leq \delta 
\end{align}
and since $\epsilon = \frac{\tau}{8D^2} $ we conclude
\begin{align}
& \left(\frac{8CD^2}{\tau}\right)^{2k}\cdot 2 e^{-2L\tau^2/4D^4} \leq \delta \nonumber \\
& 2k \log\left(\frac{8CD^2}{\tau}\right)- \frac{L \tau^2}{2D^4} \leq \log \left(\frac{\delta}{2} \right) \nonumber \\
& L \geq \frac{2D^4}{\tau^2} \left[ 2k \log\left(\frac{8CD^2}{\tau}\right) - \log \left(\frac{\delta}{2} \right)\right].
\end{align}
\end{proof}

\section{Ablation Study on the number of projections for a stable SWD estimation} \label{ablation_nproj}
In this section, we present empirical evidences on the minimal number of projections needed to estimate the Sliced Wasserstein Distance via Monte Carlo in a stable and robust way (\Cref{app:fig:swd_projections}).\\
We compute the SWD estimate with $L=\{10, 50, 70, 100, 500, 1000, 2000, 5000, 10000, 15000,$ $ 20000\}$ 
across datasets and image corruptions.
We show the trade-off between runtime and \ours metrics behaviour as $L$ increases. 
\begin{figure}
    \centering
    \includegraphics[width=\textwidth]{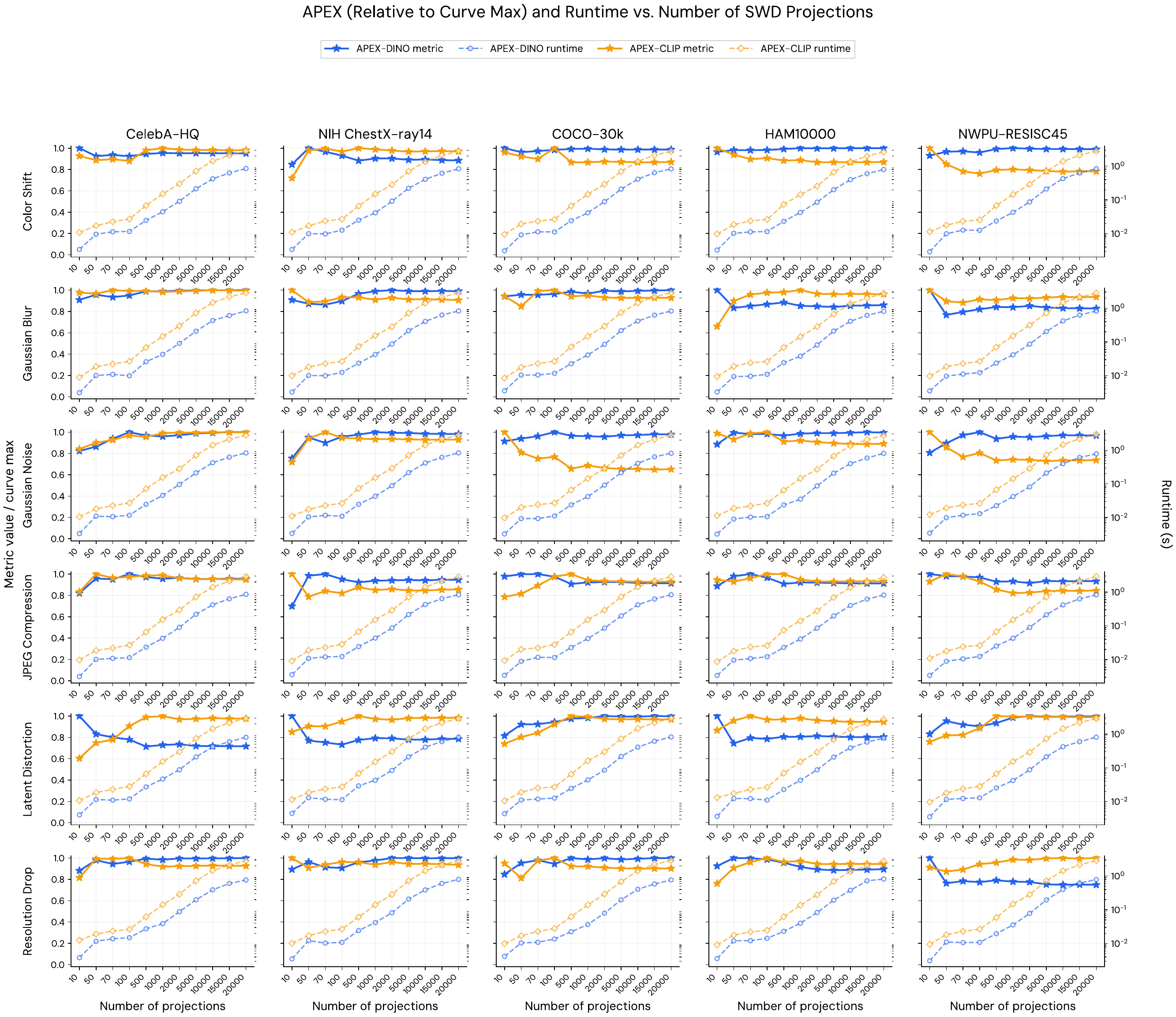}
    \caption{
        \textbf{Number of SWD projections vs. execution time for APEX across image corruptions and datasets.}
        We report the APEX metrics score (max-normalised) on the left axis, while dashed curves report execution time on the right axis. Both trends are shown as the number of projection $L$ used in the SWD computation increases.}
    \label{app:fig:swd_projections}
\end{figure}

\section{Intra-dataset stability} \label{sec:app_intradataset}
In this section, we extend the results on intra-dataset stability giving the score each metric achieves on every dataset. 
We precise that KID and DINO+MMD should be interpreted cautiously, since their finite-sample biases are close to zero, making CV poorly conditioned.
\begin{longtable}{llccc}
\caption{\textbf{Intra-dataset real-vs-real statistics.}
For each dataset and metric, we report the finite-sample bias $\bar{\nu}$, the standard deviation $\sigma$, and the coefficient of variation $\mathrm{CV}=\sigma/|\bar{\nu}|$ over $R{=}20$ random splits.}
\label{tab:intradataset-full} \\
\toprule
Dataset & Metric & $\bar{\nu}$ & $\sigma$ & CV \\
\midrule
\endfirsthead

\toprule
Dataset & Metric & $\bar{\nu}$ & $\sigma$ & CV \\
\midrule
\endhead

\midrule
\multicolumn{5}{r}{\emph{continued on next page}} \\
\endfoot

\bottomrule
\multicolumn{5}{p{0.9\linewidth}}{\footnotesize $^\dagger$CV is poorly conditioned when the mean real-vs-real distance is close to zero.} \\
\endlastfoot

\multirow{6}{*}{CelebA-HQ}
& FID
& $5.73$ & $5.64{\times}10^{-2}$ & ${0.0098}$ \\
& KID$^\dagger$
& $-7.28{\times}10^{-6}$ & $2.35{\times}10^{-5}$ & $3.23$ \\
& CMMD
& $2.99{\times}10^{-1}$ & $3.08{\times}10^{-2}$ & $0.1029$ \\
& DINO+MMD$^\dagger$
& $4.34{\times}10^{-6}$ & $3.33{\times}10^{-5}$ & $7.68$ \\
\rowcolor{gray!10}
& \ours-CLIP
& $1.54{\times}10^{-2}$ & $4.78{\times}10^{-4}$ & ${{0.0310}}$ \\
\rowcolor{gray!10}
& \ours-DINO
& $5.32{\times}10^{-4}$ & $3.12{\times}10^{-5}$ & ${0.0587}$ \\

\midrule
\multirow{6}{*}{ChestX-ray}
& FID
& $3.84$ & $2.52{\times}10^{-2}$ & ${0.0065}$ \\
& KID$^\dagger$
& $-1.34{\times}10^{-5}$ & $1.83{\times}10^{-5}$ & $1.36$ \\
& CMMD
& $7.10{\times}10^{-2}$ & $2.00{\times}10^{-2}$ & $0.2823$ \\
& DINO+MMD$^\dagger$
& $-4.83{\times}10^{-6}$ & $2.42{\times}10^{-5}$ & $5.01$ \\
\rowcolor{gray!10}
& \ours-CLIP
& $6.61{\times}10^{-3}$ & $3.19{\times}10^{-4}$ & ${0.0483}$ \\
\rowcolor{gray!10}
& \ours-DINO
& $1.44{\times}10^{-4}$ & $6.70{\times}10^{-6}$ & ${{0.0465}}$ \\

\midrule
\multirow{6}{*}{COCO-30k}
& FID
& $13.14$ & $8.89{\times}10^{-2}$ & ${0.0068}$ \\
& KID$^\dagger$
& $1.12{\times}10^{-5}$ & $4.11{\times}10^{-5}$ & $3.67$ \\
& CMMD
& $3.22{\times}10^{-1}$ & $1.98{\times}10^{-2}$ & $0.0615$ \\
& DINO+MMD$^\dagger$
& $1.83{\times}10^{-6}$ & $9.66{\times}10^{-6}$ & $5.28$ \\
\rowcolor{gray!10}
& \ours-CLIP
& $1.67{\times}10^{-2}$ & $3.83{\times}10^{-4}$ & ${{0.0230}}$ \\
\rowcolor{gray!10}
& \ours-DINO
& $6.38{\times}10^{-4}$ & $1.48{\times}10^{-5}$ & ${0.0233}$ \\

\midrule
\multirow{6}{*}{HAM10000}
& FID
& $5.79$ & $6.79{\times}10^{-2}$ & ${0.0117}$ \\
& KID$^\dagger$
& $7.47{\times}10^{-6}$ & $4.57{\times}10^{-5}$ & $6.11$ \\
& CMMD
& $1.78{\times}10^{-1}$ & $4.66{\times}10^{-2}$ & $0.2620$ \\
& DINO+MMD$^\dagger$
& $4.46{\times}10^{-6}$ & $2.42{\times}10^{-5}$ & $5.43$ \\
\rowcolor{gray!10}
& \ours-CLIP
& $9.74{\times}10^{-3}$ & $5.98{\times}10^{-4}$ & ${0.0614}$ \\
\rowcolor{gray!10}
& \ours-DINO
& $4.14{\times}10^{-4}$ & $2.06{\times}10^{-5}$ & ${{0.0497}}$ \\

\midrule
\multirow{6}{*}{RESISC45}
& FID
& $10.64$ & $8.95{\times}10^{-2}$ & ${0.0084}$ \\
& KID$^\dagger$
& $1.02{\times}10^{-5}$ & $3.84{\times}10^{-5}$ & $3.75$ \\
& CMMD
& $2.31{\times}10^{-1}$ & $3.74{\times}10^{-2}$ & $0.1621$ \\
& DINO+MMD$^\dagger$
& $1.90{\times}10^{-6}$ & $1.82{\times}10^{-5}$ & $9.56$ \\
\rowcolor{gray!10}
& \ours-CLIP
& $1.23{\times}10^{-2}$ & $5.10{\times}10^{-4}$ & ${0.0414}$ \\
\rowcolor{gray!10}
& \ours-DINO
& $5.44{\times}10^{-4}$ & $2.06{\times}10^{-5}$ & ${{0.0378}}$ \\

\end{longtable}
\section{Degradation Sensitivity} \label{app:fig:degradations}
This section extends the result assessing the metrics' degradation sensitivity across domains.
In \Cref{fig:examples_of_degr}, we report visual samples of the degraded images through the perturbations presented in \Cref{subsubsec:degradation_sensitivity}. 
Then, we show the results on the \ours metrics and the baselines in \Cref{app:fig:sensitivity}.
Finally, we propose a per-layer analysis on \ours-DINO focusing on the response of intermediate layers to progressive degradation in \Cref{app:fig:sensitivity_apexdino}.

\begin{figure}
    \centering
    \includegraphics[width=0.5\textwidth]{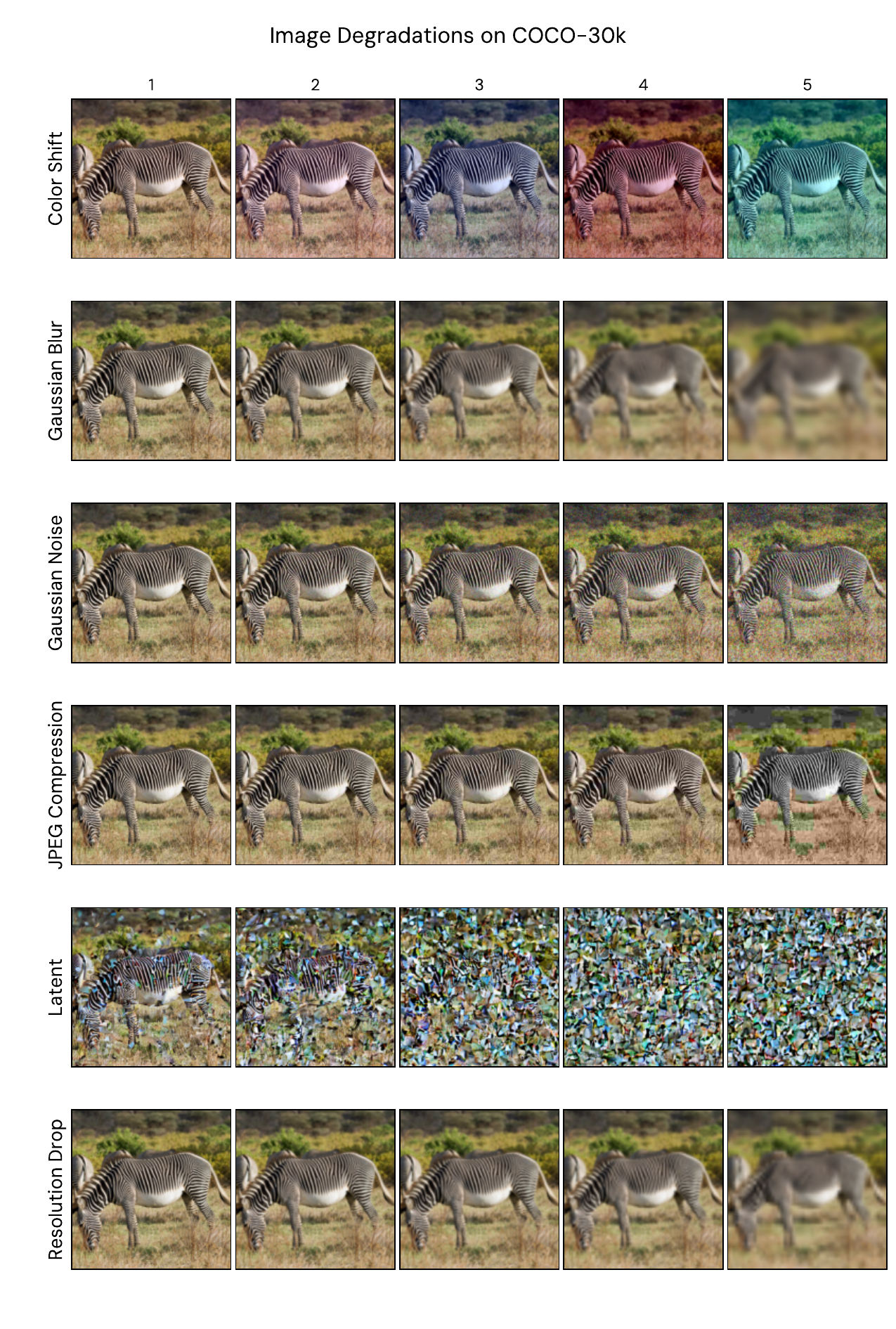}
    \caption{\textbf{Visualization of image degradations on COCO-30k.} Qualitative examples of the six degradation categories across five levels of increasing severity.}
    \label{fig:examples_of_degr}
\end{figure}

\begin{figure}[htbp!]
    \centering
    \includegraphics[width=\textwidth]{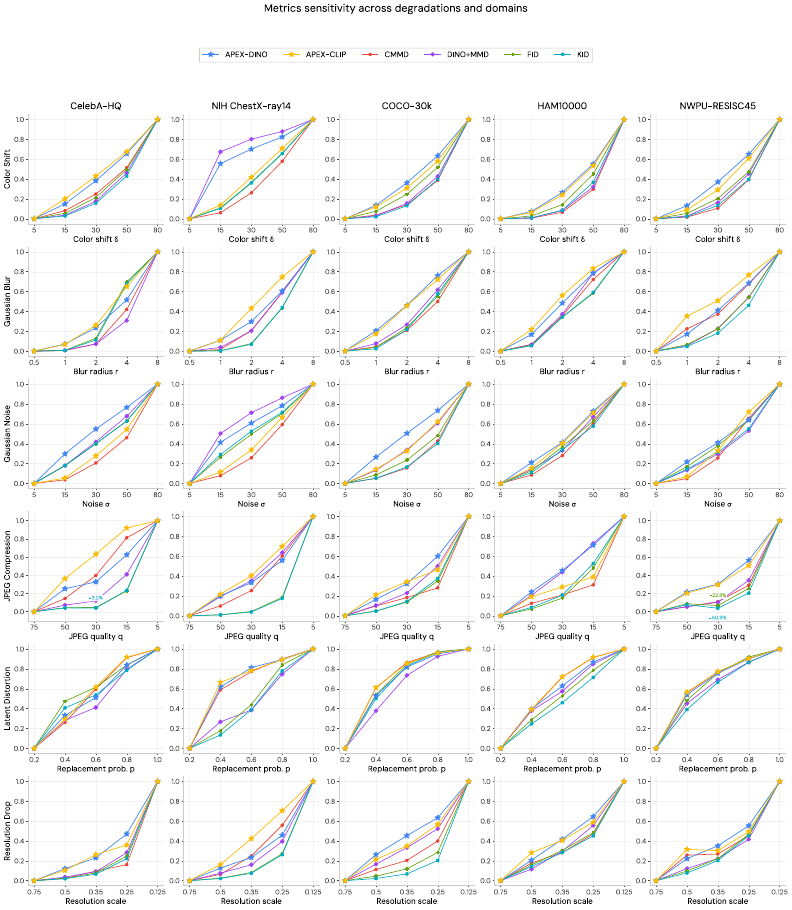}
    \caption{\textbf{Metric sensitivity across degradations and domains.}
    We compare the response of all metrics to progressively stronger perturbations across six degradation types and five evaluation datasets.
    Rows correspond to degradation families, while columns correspond to datasets.
    Each curve is independently min--max normalized to $[0,1]$, and the x-axis reports the degradation parameter used for the corresponding transform.
    \textcolor[HTML]{4285F4}{APEX-DINO} and \textcolor[HTML]{FBBC05}{APEX-CLIP} generally exhibit smooth and consistent responses across domains.
    By contrast, the baselines---\textcolor[HTML]{EA4335}{CMMD}, \textcolor[HTML]{A142F4}{DINO+MMD}, \textcolor[HTML]{65A30D}{FID}, \textcolor[HTML]{00ACC1}{KID}
    ---show stronger local fluctuations in several settings.
    Percentage annotations mark monotonicity violations, reporting the relative decrease with respect to the previous severity level.
    The heterogeneous behaviour observed across datasets, especially for latent distortions, highlights the presence of degradation--domain interactions.}
    \label{app:fig:sensitivity}
\end{figure}

\begin{figure}[htbp!]
    \centering
    \includegraphics[width=\textwidth]{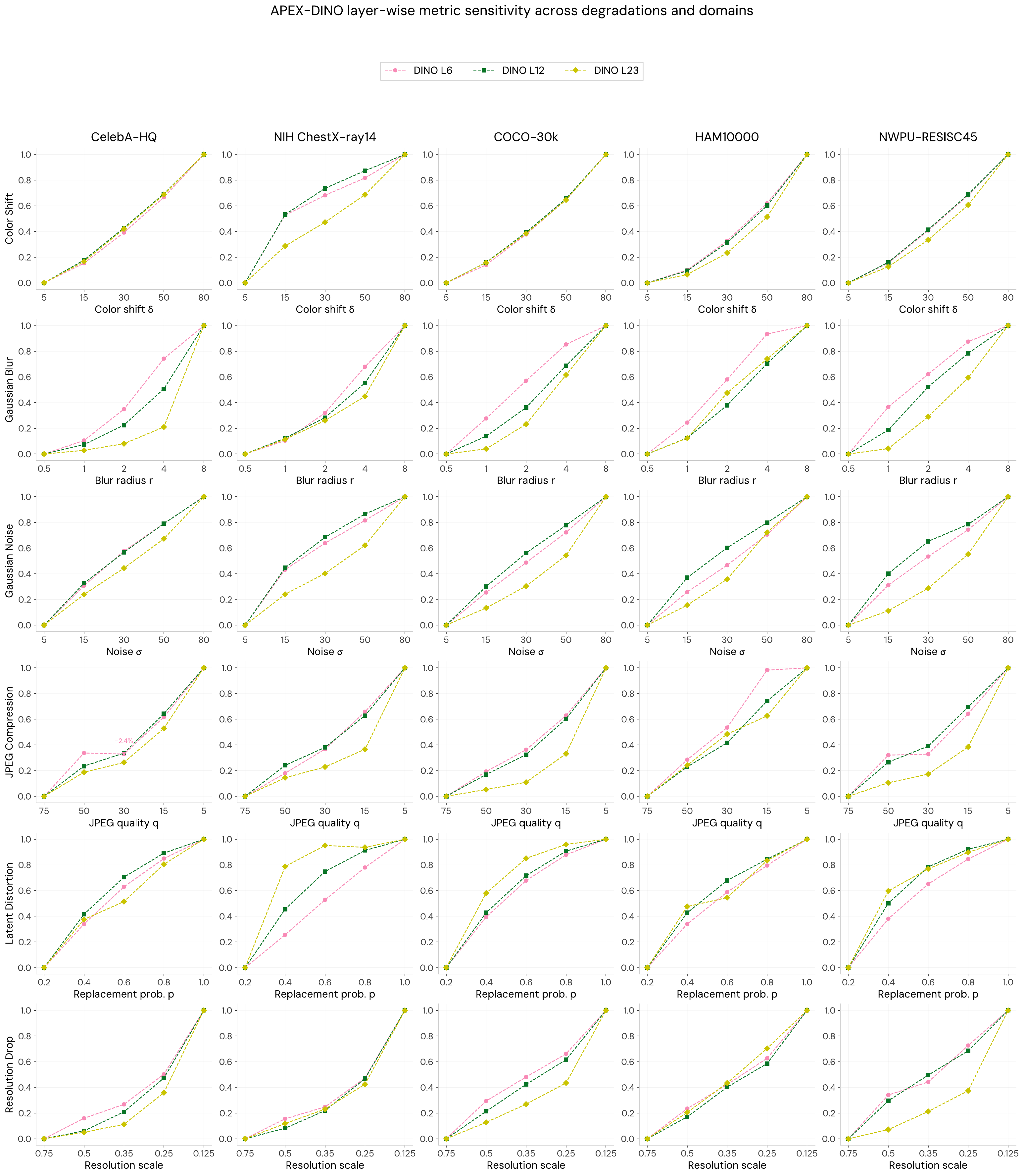}
    \caption{\textbf{Layer-wise sensitivity of APEX-DINO.}
    We report normalized SWD responses computed separately on DINOv2 layers \textcolor[HTML]{F88AB6}{L6}, \textcolor[HTML]{097426}{L12}, and \textcolor[HTML]{CAC400}{L23} across datasets and degradations.
    Shallow features \textcolor[HTML]{F88AB6}{L6} are generally most sensitive to low-level corruptions, such as colour shifts, noise, JPEG artifacts, and resolution loss, while deep features \textcolor[HTML]{CAC400}{L23} are more invariant to these perturbations.
    The intermediate layer \textcolor[HTML]{097426}{L12} captures structural changes between local appearance and high-level semantics.
    Latent distortions affect all layers more uniformly, suggesting that they disrupt multiple levels of the visual representation.}
    \label{app:fig:sensitivity_apexdino}
\end{figure}

\section{Image Synthesis and Refinement}
\label{sec:app:img_refinem}
In this section, we provide evidences to evolving image fidelity throughout the generative process in \Cref{app:fig:generation_refinement}. 
We use Stable Diffusion v1.5 to generate samples and extract intermediate image representations.
Additionally, we show the results below.
\begin{figure}[h!]
    \centering
    \includegraphics[width=\textwidth]{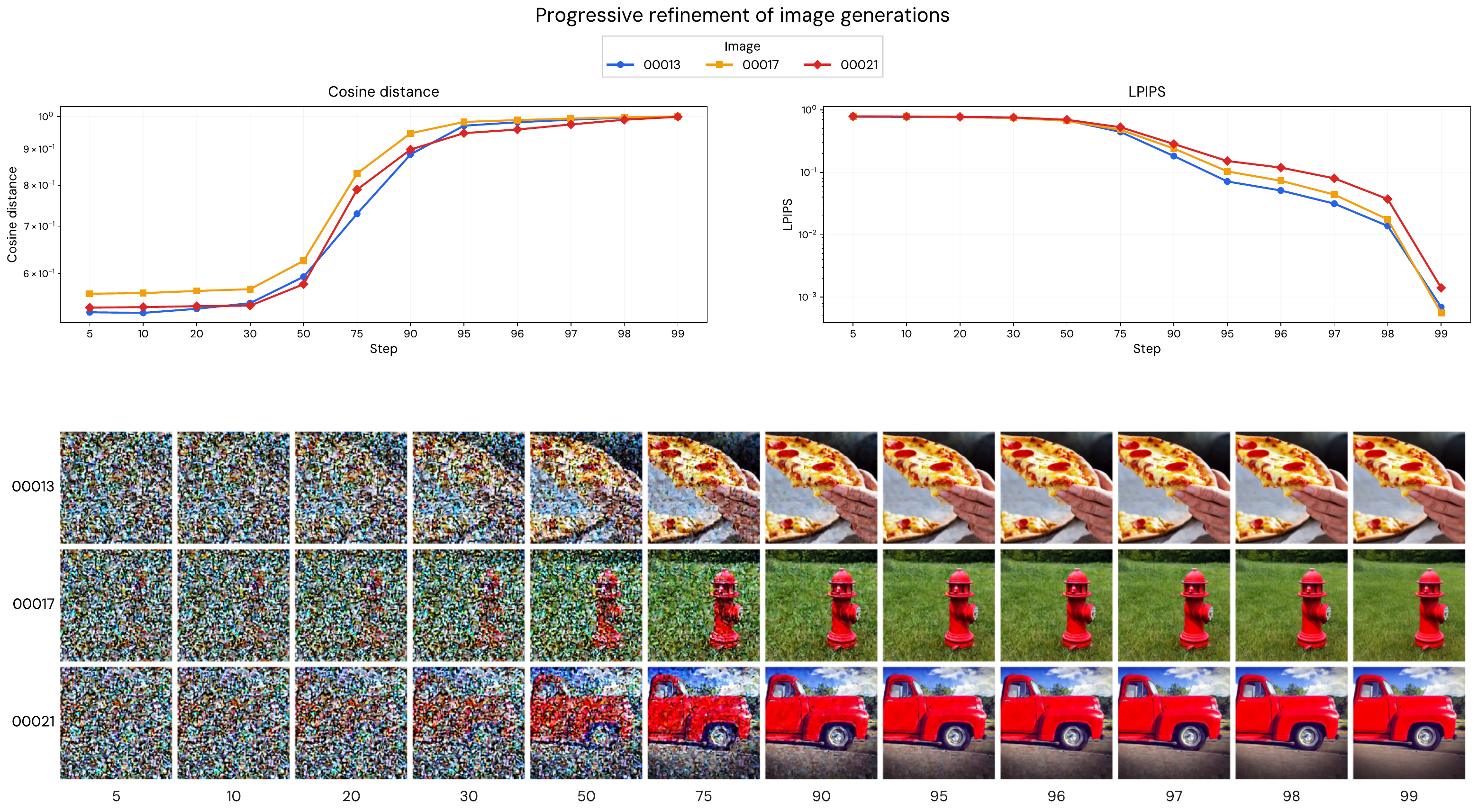}
    \caption{\textbf{Generative models perform refinement in the last timesteps.} 
    Top: Evolution of the cosine distance between CLIP embeddings (semantic score) and LPIPS (perceptual score) across generation steps for selected samples. 
    Bottom: Qualitative results across generative timesteps of Stable Diffusion.
    The semantic CLIP score shows saturation for increasing timesteps, while the perceptual LPIPS score decreases. 
    This empirically demonstrates that the generation process converges to the final image by refining fine details (i.e., color saturation and shadows) in the last steps.}
    \label{app:fig:generation_refinement}
\end{figure}
\begin{figure}[htbp!]
    \centering
    \includegraphics[width=0.76\textwidth]{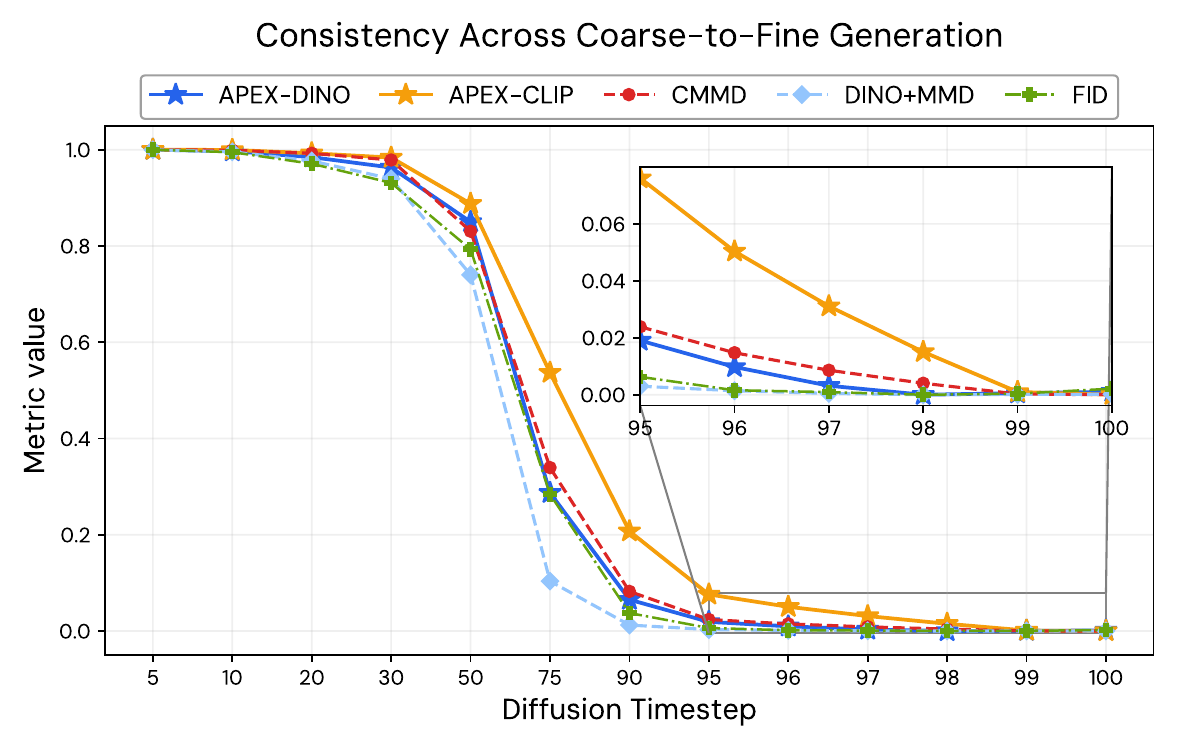}
    \caption{\textbf{Consistency across coarse-to-fine generation.} 
    We evaluate metric responses along the Stable Diffusion denoising process. The metrics correctly exhibit a decreasing trend, reflecting the progressive improvement in sample quality.}
    \label{fig:progressive_generation_metrics}
    \vspace{-\baselineskip}
\end{figure}

\section{Details of the Human Perceptual Correlation Study} \label{app:human}
This section expands upon the experimental setup, deployment, and analysis of the human perceptual study presented in \Cref{subsubsec:human}.

\subsection{Stimuli and Design.}
\begin{itemize}[leftmargin=1.5em]
    \item \textbf{Datasets:} To rigorously test the domain invariance of the metrics, we select datasets spanning highly diverse visual domains: high-resolution faces (CelebA-HQ), natural images (COCO-30k), medical imaging (Chest-Xray and HAM10000), and satellite imagery (RESISC45).
    \item \textbf{Conditions \& Sampling:} For each of the 5 datasets, we construct paired stimuli for five pixel-space degradation families (Gaussian blur, Gaussian noise, JPEG compression, color shift, and resolution drop) at severity levels 1, 3, and 5. Additionally, we evaluate latent distortions at severity levels $p \in \{0.20, 0.40, 0.60, 0.80, 1.00\}$. 
    \item \textbf{Volume:} We systematically sample $N=70$ images per condition (dataset $\times$ degradation $\times$ severity). With $31$ experimental conditions per dataset, this yields exactly $2,170$ files per dataset and a comprehensive total of $10,850$ images for the entire study.
    \item \textbf{Image Handling:} Image pairs are generated by strictly matching filenames between the original and degraded directories to ensure exact pixel-level correspondence. During the study, all images are displayed at their native resolution without any rescaling to prevent the introduction of secondary resizing artifacts.
\end{itemize}

\subsection{Protocol and Deployment}
\begin{itemize}[leftmargin=1.5em]
    \item \textbf{Procedure:} To calibrate annotators to the 1--5 scale, each session begins with $3$ warm-up trials. After each response, the annotator is shown the population-average rating as feedback. The main evaluation phase consists of $36$ trials per annotator.
    \item \textbf{Randomization and Quality Control:} Trials are sampled from the full set of conditions using an interleaved schedule to prevent consecutive trials originating from the same dataset. The presentation order is fully randomized per annotator. To ensure data quality, any responses logged faster than $1.5$\,s are automatically flagged and excluded from the final analysis.
    \item \textbf{Infrastructure:} The study is deployed as a single-page React application hosted on Vercel, feeding response data into a Google Sheets backend via a Google Apps Script endpoint. The interface is bilingual (English/Italian) and responsive across desktop, tablet, and mobile platforms. Sessions are tracked via anonymous tokens, and no personally identifiable information is collected.
\end{itemize}

\subsection{Detailed Statistical Analysis}
For each specific condition $c = (k, \tau, s)$---denoting dataset $k$, degradation $\tau$, and severity $s$---we compute the Mean Opinion Score (MOS)~\citep{streijl2016mean}:
$$
\mathrm{MOS}(c) = \frac{1}{|\mathcal{A}_c|} \sum_{a \in \mathcal{A}_c} r_a
$$
where $r_a \in \{1,\dots,5\}$ represents the rating provided by annotator $a$, and $\mathcal{A}_c$ is the set of valid annotators for that specific condition. The  metric degradation signal is defined as $\Delta_d(c) = d\bigl(\tau_s(\mathcal{D}_k),\,\mathcal{D}_k\bigr)$.

We evaluate the metrics using three criteria:
\begin{itemize}[leftmargin=1.5em]
    \item \textbf{Spearman $\rho$}: Evaluates the rank correlation between $\mathrm{MOS}(c)$ and $\Delta_d(c)$ across all evaluated conditions~\citep{spearman_ro}.
    \item \textbf{Kendall $\tau$}: Provides a secondary concordance measure that is highly robust to tied ranks~\citep{kendall_tau}.
    \item \textbf{Per-dataset $\rho$ Variance}: The Spearman correlation is calculated within each dataset in isolation. The variance of these per-dataset $\rho$ values indicates the metric's domain bias. A metric strongly correlated with human perception on natural images but failing on satellite or medical imagery will exhibit high variance. Conversely, a domain-invariant metric will exhibit consistently high $\rho$ values with low variance across all 5 datasets.
\end{itemize}
All statistical significance for these correlations is rigorously assessed using a permutation test with $10{,}000$ permutations.
\end{document}